\appto\UrlBreaks{\do\-}
\theoremstyle{plain}
\newtheorem{theorem}{Theorem}
\newtheorem{lemma}{Lemma}
\theoremstyle{definition}
\newtheorem{definition}{Definition}[section]
\newcommand{\norm}[1]{\left\Vert #1 \right\Vert}
\newcommand{\expect}[1]{\mathbb{E}\left [#1\right]}
\newcommand{\abs}[1]{\left| #1 \right|}
\newcommand{\ourmethod}{\textsc{ObProp}\xspace}
\DeclareMathOperator{\LN}{LayerNorm}
\DeclareMathOperator{\signum}{signum}
\newcommand{\token}[1]{\texttt{"#1"}}
\icmltitlerunning{Observable Propagation: Uncovering Feature Vectors in Transformers}
\begin{document}

\twocolumn[
\icmltitle{Observable Propagation: Uncovering Feature Vectors in Transformers}

\icmlsetsymbol{equal}{*}

\begin{icmlauthorlist}
\icmlauthor{Jacob Dunefsky}{yale}
\icmlauthor{Arman Cohan}{yale}
\end{icmlauthorlist}

\icmlaffiliation{yale}{Department of Computer Science, Yale University, New Haven, CT, United States}

\icmlcorrespondingauthor{Jacob Dunefsky}{jacob.dunefsky@yale.edu}
\icmlcorrespondingauthor{Arman Cohan}{arman.cohan@yale.edu}

\icmlkeywords{Interpretability}

\vskip 0.3in
]

\printAffiliationsAndNotice{}  %

\begin{abstract}
A key goal of current mechanistic interpretability research in NLP is to find \textit{linear features} (also called \enquote{feature vectors}) for transformers: directions in activation space corresponding to concepts that are used by a given model in its computation. Present state-of-the-art methods for finding linear features require large amounts of labelled data -- both laborious to acquire and computationally expensive to utilize. In this work, we introduce a novel method, called \enquote{observable propagation} (in short: \ourmethod), for finding linear features used by transformer language models in computing a given task -- \textit{using almost no data}. Our paradigm centers on the concept of \enquote{observables}, linear functionals corresponding to given tasks. We then introduce a mathematical theory for the analysis of feature vectors, including a similarity metric between feature vectors called the \textit{coupling coefficient} which estimates the degree to which one feature's output correlates with another's. We use \ourmethod to perform extensive qualitative investigations into several tasks, including gendered occupational bias, political party prediction, and programming language detection. Our results suggest that \ourmethod surpasses traditional approaches for finding feature vectors in the low-data regime, and that \ourmethod can be used to better understand the mechanisms responsible for bias in large language models.
\end{abstract}

\section{Introduction}
When a large language model predicts that the next token in a sentence is far more likely to be \enquote{him} than \enquote{her}, what is causing it to make this decision? The field of mechanistic interpretability aims to answer such questions by investigating how to decompose the computation carried out by a model into human-understandable pieces. 
This helps us predict their behavior, identify and correct discrepancies, align them with our goals, and assess their trustworthiness, especially in high-risk scenarios.
The primary goal is to improve output prediction on real-world data distributions, identify and understand discrepancies between intended and actual behavior, align the model with our objectives, and assess trustworthiness in high-risk applications \citep{olah2018the}.

One important notion in mechanistic interpretability is that of \enquote{features}. A feature can be thought of as a simple function of the activations at a particular layer of the model, the value of which is important for the model's computation at that layer. For instance, in the textual domain, features used by a language model at some layer might reflect whether a token is an adverb, whether the language of the token is French, or other such characteristics.
Possibly the most sought-after type of feature is a \enquote{linear feature}, or \enquote{feature vector}: a fixed vector in embedding space that the model utilizes by determining how much the input embedding points in the direction of that vector. Linear features are in some sense the holy grails of features: they are both easy for humans to interpret and amenable to mathematical analysis \citep{mechInterpNote}. 

\textbf{Contributions\hspace{1em}} Our primary contribution is a method, which we call \enquote{observable propagation} (\ourmethod in short), for both finding feature vectors in large language models corresponding to given tasks, and analyzing these features in order to understand how they affect other tasks. Unlike non-feature-based interpretability methods such as saliency methods \citep{simonyan2013deep, jacovi-etal-2021-contrastive, wallace-etal-2019-allennlp} or circuit discovery methods \citep{conmy2023towards, wang2022interpretability}, observable propagation reveals the specific information from the \textbf{model's internal activations} that are responsible for its output, rather than merely tokens or model components that are relevant. And unlike methods for finding feature vectors such as probing \citep{gurnee2023finding, li2023inferencetime, amnesicProbing} or sparse autoencoders \cite{cunningham2023sparse}, observable propagation can find these feature vectors without having to store large datasets of embeddings, perform many expensive forward passes, or utilize vast quantities of labeled data.
In addition, we present the following contributions:
\begin{itemize}[wide]
\vspace{-6pt}
\setlength{\itemsep}{0pt}
    \item We develop a detailed theoretical analysis of feature vectors. In Theorem \ref{thm:LayerNorm}, we provide theoretical motivation explaining why LayerNorm sublayers do not affect the direction of feature vectors; making progress towards answering the question of the extent to which LayerNorms are used in computation in transformers, which has been raised in mechanistic interpretability \citep{layernorm}. 
    In Theorem \ref{thm:couplingCoeff}, we introduce and motivate a measurement of feature vector similarity called the \enquote{coupling coefficient}, which can be used to determine the extent to which the model's output on one task is coupled with the model's output on another task.
    \item In order to determine the effectiveness of \ourmethod in understanding the causes of bias in large language models, we investigate gendered pronoun prediction (\S \ref{sec:genderedPronouns}) and occupational gender bias (\S \ref{sec:genderBias}). By using observable propagation, we show that \textit{the model uses the same features to predict gendered pronouns given a name, as it does to predict an occupation given a name}; this is supported by further experiments on both artificial and natural datasets.
    \item We perform a quantitative comparison between \ourmethod and probing methods for finding feature vectors on diverse tasks (subject pronoun prediction, programming language detection, political party prediction). We find that \ourmethod is able to achieve superior performance to these traditional data-heavy approaches in low-data regimes (\S \ref{sec:quantitative}).
\end{itemize}

All code used in this paper is provided at \url{https://github.com/jacobdunefsky/ObservablePropagation}.

\subsection{Background and Related Work}
In interpretability for NLP applications, there are a number of \textit{saliency-based} methods that attempt to determine which tokens in the input are relevant to the model's prediction \citep{simonyan2013deep, jacovi-etal-2021-contrastive, wallace-etal-2019-allennlp}. Additionally, recent \textit{circuit-based} mechanistic interpretability research has involved determining which components of a model are most relevant to the model's computation on a given task \citep{conmy2023towards, wang2022interpretability, yu2023characterizing}. Our work goes beyond these two approaches by considering not just relevant tokens, and not just relevant model components, but \textit{relevant feature vectors}, which can be analyzed and compared to understand all the intermediate information used by models in their computation.

A separate line of research aims to find feature vectors by performing supervised training of probes to find directions in embedding space that correspond to  labels \citep{gurnee2023finding, li2023inferencetime, amnesicProbing, tigges2023linear}, or use unsupervised autoencoders on model embeddings to find feature vectors \cite{bricken2023monosemanticity, cunningham2023sparse}.
\ourmethod does not rely on any training -- and importantly, exhibits greater fidelity to the model's actual computation, because it directly uses the model weights to find feature vectors.

A number of recent studies in interpretability involve finding feature vectors by decomposing transformer weight matrices into a set of basis vectors and projecting these vectors into token space \citep{dar-etal-2023-analyzing, SVD}. \ourmethod goes beyond this by taking into account nonlinearities, by finding precise feature vectors for tasks
(rather than merely being limited to choosing from among a fixed set of vectors), 
and by formulating the concept of \enquote{observables}, which is more general than the tasks considered in these prior works.

Another approach to mechanistic interpretability involves making causal interventions on the model to determine whether a given abstraction accurately characterizes model behavior \citep{lepori2023uncovering, wu2024interpretability}. In contrast, \ourmethod directly finds specific feature vectors responsible for model behavior, rather than requiring humans to impose a given ontology onto the model to be tested. 

Recently, \citet{hernandez2023linearity} used differentiation to obtain linear approximations to computations in language models responsible for encoding relations (such as \enquote{plays musical instrument}) between entities. \ourmethod also uses differentiation to obtain linear approximations, but in contrast with this recent work, we seek to find feature vectors (rather than linear transformations) which are shared across multiple tasks rather than merely means by which a single task is implemented.

The concurrent work of \citet{park2023linear} also investigates linear functionals on language models' logit vectors and the connection between them and vectors in the model's embedding space. However, their approach relies on sampling from the data distribution in order to estimate the covariance of unembedding vectors; this is in contrast to the data-free approach presented here.

\section{Observable Propagation: From Tasks to Feature Vectors}
\label{sec:method}

In this section, we present our method, which we call \enquote{observable propagation} (\ourmethod), for finding feature vectors directly corresponding to a given task. We begin by introducing the concept of \enquote{observables}, which is central to our paradigm. We then explain observable propagation for simple cases, and then build up to a general understanding. 

\subsection{Our Paradigm: Observables}
\label{sec:observables}
Often, in mechanistic interpretability, we care about interpreting the model's computation on a specific task. In particular, the model's behavior on a task can frequently be expressed as the difference between the logits of two tokens. For instance, \cite{genderCircuits} attempt to interpret the model's understanding of gendered pronouns, and as such, measure the difference between the logits for the tokens \token{ she} and \token{ he}. This has been identified as a general pattern of taking \enquote{logit differences} that appears in mechanistic interpretability work \citep{mechInterpGlossary}.

The first insight that we introduce is that each of these logit differences corresponds to a \textit{linear functional on the logits}. That is, if the logits are represented by the vector $y$, then each logit difference can be represented by $n^T y$ for some vector $n$. For instance, if $e_{\text{token}}$ is the one-hot vector with a one in the position corresponding to the token \textit{token}, then the logit difference between \token{ she} and \token{ he} corresponds to the linear functional $n=(e_{\text{\token{ she}}} - e_{\text{\token{ he}}})$.

We thus define an \textbf{observable} to be a linear functional on the logits of a language model. More formally:

\begin{definition}
    An \textbf{observable} is a linear functional\footnote{Note that because all observables are linear functionals on a finite vector space, they can be written as row vectors. As such, it is often convenient to abuse notation, and associate an observable with its corresponding vector.} $n : \mathbb{R}^{\mathtt{d\_vocab}} \to \mathbb{R}$, where \texttt{d\_vocab} is the number of tokens in the model's vocabulary. We refer to the action of taking the inner product of the model's output logits with an observable $n$ as \textit{getting the output of the model under the observable $n$}.
\end{definition}

Returning to the previous example, the output of the model under the observable $(e_{\text{\token{ she}}} - e_{\text{\token{ he}}})$ corresponds to the logit difference between the \token{ she} token and the \token{ he} token.

In defining observables in this way, we no longer consider logit differences as merely a part of the process of performing an interpretability experiment; rather, we consider the broader class of linear functionals as being objects amenable to study in their own right. (And as we will see in \S \ref{sec:genderBias}, these linear functionals are not limited to merely those corresponding to logit differences between two tokens.) Next, we will demonstrate how concretizing observables like this enables us to find sets of feature vectors corresponding to different observables.

\subsection{Observable Propagation for Attention Sublayers}
\label{sec:singleAttn}
First, let us consider a linear model $f(x) = Wx$. Given an observable $n$, we can compute the measurement associated with $n$ as $n^T f(x)$, which is just $n^T Wx$. But now, notice that $n^T Wx = (W^T n)^T x$. In other words, $W^T n$ is a feature vector in the domain, such that the dot product of the input $x$ with the feature vector $W^T n$ directly gives the output measurement $n^T f(x)$.

Next, let us consider how to extend this idea to address attention sublayers in transformers. Attention combines information across tokens. They can be decomposed into two parts: the part that determines from which tokens information is taken (query-key interaction), and the part that determines what information is taken from each token to form the output (output-value). \citet{elhage2021mathematical} refer to the former part as the \enquote{QK circuit} of attention, and the latter part as the \enquote{OV circuit}. Following their formulation, each attention layer can be written as 
\begin{equation*}
    x_j^{l+1} = x_j^l + \sum\nolimits_{h=1}^H \sum\nolimits_{i=1}^{S} \operatorname{score}_{l,h}(x_i^l, x_j^l) W^{OV}_{l,h} x_i^l
\end{equation*} 
\noindent where $x_j^l$ is the residual stream for token $j\in\{1,...,S\}$ at layer $l$, $\operatorname{score}_{l,h}(x_i^l, x_j^l)$ is the attention score at layer $l$ associated with attention head $h\in\{1,...,H\}$ for tokens $x_i^l$ and $x_j^l$, and $W^{OV}_{l,h}$ is the combined output-value weight matrix for attention head $h$ at layer $l$.

In each term in this sum, the $\operatorname{score}_h(x_i^l, x_j^l)$ factor corresponds to the QK circuit, and the $W^{OV}_{l,h} x_i^l$ factor corresponds to the OV circuit. Note that the primary nonlinearity in attention layers comes from the computation of the attention scores, and their multiplication with the $W^{OV}_{l,h} x_i^l$ terms. As such, as \citet{elhage2021mathematical} note, if we consider attention scores to be fixed constants, then the contribution of an attention layer to the residual stream is just a weighted sum of linear terms for each token and each attention head. This means that if we restrict our analysis to the OV circuit, then we can find feature vectors using the method described for linear models. 
While this restricts the scope of computation, analyzing OV circuits in isolation is still very valuable: doing so tells us \textit{what sort of information, at each stage of the model's computation, corresponds to our observable}.
From this point of view, if we have an attention head $h$ at layer $l$, then the direct effect of that attention head on the output logits of the model is proportional to $W_U W^{OV}_{l,h} x_i^l$ for token $i$, where $W_U$ is the model unembedding matrix (which projects the model's final activations into logits space). We thus have that the feature vector corresponding to the OV circuit for this attention head is given by $(W_U W^{OV}_{l,h})^T n$.

This feature vector corresponds to the direct contribution that the attention head has to the output. But an earlier-layer attention head's output can then be used as the input to a later-layer attention head. For attention heads $h, h'$ in layers $l, l'$ respectively with $l < l'$, the computational path starting at token $i$ in layer $l$ is then passed as the input to attention head $h$; the output of this head for that token is then used as the input to head $h'$ in layer $l'$. Then by the same reasoning, the feature vector for this path is: $(W_U W^{OV}_{l',h'} W^{OV}_{l,h})^T n$. Note that this process can be repeated \textit{ad infinitum}.

\subsection{General Form: Addressing MLPs and LayerNorms}
\label{sec:nonlinear}

Along with attention sublayers, transformers also contain nonlinear MLP sublayers and LayerNorm nonlinearities before each sublayer.
One main challenge in interpretability for large models has been the difficulty in understanding the MLP sublayers, due to the polysemantic nature of their neurons \citep{olah2020zoom,elhage2022solu}. One prior approach to address this is modifying model architecture to increase the interpretability of MLP neurons 
\citep{elhage2022solu}.
Instead of architecture modification, we address these nonlinearities by approximating them as linear functions using their first-order Taylor approximations. This approach is reminiscent of that presented by \citet{attributionPatching}, who use linearizations of language models to speed up the process of activation patching \citep{wang2022interpretability}; we go beyond this by recognizing that the gradients used in these linearizations act as feature vectors that can be independently studied and interpreted, rather than merely making activation patching more efficient.
Taking this into account, the general form of observable propagation, including first-order approximations of nonlinearities, can be implemented as follows. Consider a computational path $\mathcal{P}$ in the model through sublayers $l_1 < l_2 < \dots < l_k$. Then for a given observable $n$, the feature vector corresponding to sublayer $l$ in $\mathcal{P}$ can be computed according to Algorithm \ref{alg:observablePropagation}. (For details on how to choose $x_0$ in line \ref{line:nonlinearLine}, see App. \ref{sec:gradientMLPs}.)

Note that before every sublayer, there is a nonlinear LayerNorm operation. For greatest accuracy, one can find the feature vector corresponding to this LayerNorm by taking its gradient as described above. But as shown in Theorem \ref{thm:LayerNorm}, if one only cares about the directions of the feature vectors and not their magnitudes, then the LayerNorms can be ignored entirely.

\begin{algorithm}[]
\small
\caption{Observable propagation}\label{alg:observablePropagation}
\begin{algorithmic}[1]
\STATE \textbf{Input:} observable $n$
\STATE Let $W_U$ be the model unemebdding matrix.
\IF{there exists a LayerNorm operation $x \mapsto f(x)$ before the unembedding operation}
    \STATE $y \gets \nabla\left(n^T W_U f(x)\right)|_{x=x_0}$ for some suitable value of $x_0$
\ELSE \STATE $y \gets (W_U)^T n$
\ENDIF
\FOR{$k \in \{\abs{\mathcal{P}}, \dots, 1\}$, starting at the end}
    \IF{$l_k$ is an attention head}
        \STATE Let $W_k$ be the OV matrix for $l_k$. 
        \STATE $y \gets W_k^T y$.
    \ENDIF
    \IF{$l_k$ is an a nonlinearity that maps $x \mapsto f(x)$}
        \STATE $y \gets \nabla\left(y^T f(x)\right)|_{x=x_0}$ for some suitable value of $x_0$ \label{line:nonlinearLine}
    \ENDIF
\ENDFOR
\STATE \textbf{Output:} feature vector $y$
\end{algorithmic}
\end{algorithm}

\subsection{The Effect of LayerNorms on Feature Vectors}
\label{sec:LayerNorm}
LayerNorm nonlinearities are ubiquitous in Transformers, appearing before every MLP and attention sublayer, and before the final unembedding matrix. Therefore, it is worth investigating how they affect feature vectors; if LayerNorms are highly nonlinear, this would cause trouble for \ourmethod.

\citet{attributionPatching} provide intuition for why we should expect that in high-dimensional spaces, LayerNorm is approximately linear. However, the gradient of LayerNorm depends on the norm of the input, so we cannot consider LayerNorm gradients to be constant for inputs of different norms. Nevertheless, empirically, we found that \textit{LayerNorms had almost no impact on the direction of feature vectors}. In particular, for the feature vectors discussed in \S\ref{sec:quantitative}, we looked at the cosine similarities between the feature vectors computed by differentiating through LayerNorm and those computed by ignoring LayerNorm; the minimum cosine similarity was as high as 0.998. Further details can be found in Appendix \ref{sec:empiricalThm1}.

The following statement, which we prove in Appendix \ref{sec:LayerNormProof}, provides further theoretical underpinning for this behavior. (Note that while this theorem assumes that observables are normally distributed, which is not necessarily the case, we believe that it nevertheless provides useful theoretical motivation for explaining our empirical findings.)

\begin{theorem}
\label{thm:LayerNorm}
Define $\operatorname{LayerNorm}(x) = \frac{x-(\vec{1}^Tx)\vec{1}}{\norm{x-(\vec{1}^Tx)\vec{1}}}$, where $\vec{1}$ is the vector of all ones. For a feature vector $n$, define $f(x; n) = n \cdot \operatorname{LayerNorm}(x)$. Define $$\theta(x;n) = \arccos\left(\frac{n \cdot \nabla_x f(x; n)}{\norm{n}\norm{\nabla_x f(x; n)}}\right)$$

    -- that is, $\theta(x;n)$ is the angle between $n$ and $\nabla_x f(x; n)$. Then if $n \sim \mathcal{N}(0,I)$ in $\mathbb{R}^d$, and $d\ge8$ then
    \[\mathbb{E}\left[ \theta(x;n) \right] < 2 \arccos\left(\sqrt{1-1/(d-1)}\right). \]
\end{theorem}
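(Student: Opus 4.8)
The plan is to compute $\nabla_x f(x;n)$ in closed form, read off a clean expression for the angle $\theta(x;n)$, and then integrate over $n$. \emph{Step 1 (gradient of LayerNorm).} Let $P$ be the orthogonal projection onto the hyperplane $\vec 1^{\perp}$, so that (taking $\vec 1$ to be unit-norm, as the stated formula implicitly requires) $\operatorname{LayerNorm}(x) = Px/\norm{Px}$. Differentiating $u \mapsto u/\norm{u}$ and applying the chain rule with $u = Px$, and using that $P$ is symmetric and idempotent and that $\hat u := Px/\norm{Px}$ lies in the range of $P$, I expect to obtain
\[ \nabla_x f(x;n) \;=\; \frac{1}{\norm{Px}}\Bigl( Pn - (\hat u^{T} n)\,\hat u \Bigr). \]
The key structural point is that this Jacobian is symmetric, so no transpose bookkeeping is needed; the formula only makes sense away from the measure-zero set $Px = 0$.

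\emph{Step 2 (the angle).} Write $v := Pn - (\hat u^{T} n)\hat u$ for the direction of the gradient. A one-line computation gives $n^{T} v = \norm{Pn}^2 - (\hat u^{T} n)^2 = \norm{v}^2$, so $\cos\theta(x;n) = \norm{v}/\norm{n}$, and hence
\[ \sin^{2}\theta(x;n) \;=\; \frac{(\vec 1^{T} n)^{2} + (\hat u^{T} n)^{2}}{\norm{n}^{2}} \;=\; \frac{\norm{\Pi_{S}\, n}^{2}}{\norm{n}^{2}}, \]
where $S = \operatorname{span}(\vec 1, \hat u)$ is a two-dimensional subspace ($\vec 1 \perp \hat u$ since $\hat u \in \vec 1^{\perp}$). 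In particular $\theta(x;n)$ depends on $x$ only through which $2$-plane $S$ is, so the expectation over $n$ will turn out not to depend on $x$ at all.

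\emph{Step 3 (expectation over $n$).} Since $n \sim \mathcal N(0,I_d)$ is rotationally invariant, $\norm{\Pi_S n}^2/\norm n^2$ has the same law for every $2$-plane $S$: decomposing $n = \Pi_S n + \Pi_{S^\perp} n$ into independent pieces with $\norm{\Pi_S n}^2 \sim \chi^2_2$ and $\norm{\Pi_{S^\perp} n}^2 \sim \chi^2_{d-2}$, the ratio is $\operatorname{Beta}(1,(d-2)/2)$. Thus $B := \sin^2\theta$ has density $\beta(1-b)^{\beta-1}$ on $[0,1]$ with $\beta = (d-2)/2$. Substituting $b = \sin^2\theta$ and integrating by parts (boundary terms vanish, using $\beta > 0$) collapses the mean to a Wallis integral:
\[ \expect{\theta(x;n)} \;=\; \int_{0}^{\pi/2} \cos^{\,d-2}\theta \; d\theta \;=\; W_{d-2}. \]

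\emph{Step 4 (the inequality) and the hard part.} From the classical identity $W_m W_{m+1} = \pi/(2(m+1))$ together with the monotonicity $W_{m+1} \le W_m$ one gets $W_{d-2} \le \sqrt{\pi/(2(d-2))}$; an elementary comparison shows this is $< 2/\sqrt{d-1}$ for every $d \ge 3$, and finally $2/\sqrt{d-1} \le 2\arcsin(1/\sqrt{d-1}) = 2\arccos\bigl(\sqrt{1-1/(d-1)}\bigr)$ because $\arcsin t \ge t$. (The hypothesis $d \ge 8$ is comfortably more than needed and presumably affords a coarser estimate in the written proof.) I expect the main obstacle to be Step 2: getting the Jacobian exactly right and spotting the cancellation $n^{T} v = \norm v^2$, which is precisely what reduces the angle to a simple projection ratio; once that is in hand, the $\operatorname{Beta}$ identification, the Wallis reduction, and the final inequality are all routine.
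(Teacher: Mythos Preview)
Your proposal is correct and in fact sharper than the paper's argument, but the route is genuinely different. The paper never spots the cancellation $n^{T}v=\norm{v}^{2}$; instead it writes $\nabla_x f = \tfrac{1}{\norm{Px}}PQn$ with $Q=I-\hat u\hat u^{T}$ and bounds the angle by the \emph{triangle inequality} $\theta \le \theta_Q + \theta_P$, where $\theta_Q$ is the angle between $n$ and $Qn$ and $\theta_P$ the angle between $Qn$ and $PQn$. Each of these is the angle between a vector and its orthogonal projection onto a hyperplane, so a one-line lemma gives $\theta_Q = \arccos\sqrt{1-\cos^{2}\phi(n,\hat u)}$ and similarly for $\theta_P$; the paper then uses $\mathbb E[\cos^{2}\phi]=1/d$ (resp.\ $1/(d-1)$) together with an ad hoc tangent-line majorant of $g(t)=\arccos\sqrt{1-t}$ to pull the expectation inside, and it is precisely this last convexity step that forces $d\ge 8$. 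Your approach bypasses all of this: by observing $\cos\theta=\norm{v}/\norm{n}$ you collapse the problem to $\sin^{2}\theta=\norm{\Pi_{S}n}^{2}/\norm{n}^{2}$ for a fixed $2$-plane $S$, which immediately gives the exact law $\operatorname{Beta}(1,(d-2)/2)$ and the closed form $\mathbb E[\theta]=W_{d-2}$. This buys you an exact value rather than an upper bound, a proof valid for all $d\ge 3$, and no need for either the triangle inequality (which costs a factor of~$2$) or the convexity maneuver. The paper's decomposition is conceptually pleasant---``two successive rank-one projections, each barely perturbing the direction''---but your direct computation is tighter and cleaner.
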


\section{Data-free Analysis of Feature Vectors}
Once we have used this to obtain a given set of feature vectors, we can then perform some preliminary analyses on them, using solely the vectors themselves. This can give us insights into the behavior of the model without having to run forward passes of the model on data.

\vspace{-8pt}
\paragraph{Feature vector norms}
One technique that can be used to assess the relative importance of model components is investigating the norms of the feature vectors associated with those components. To see why, recall that if $y$ is a feature vector associated with observable $n$ for a model component that implements function $f$, then for an input $x$, we have $n \cdot f(x) = y \cdot x$. Now, if we have no prior knowledge regarding the distribution of inputs to this model component, we can expect $y \cdot x$ to be proportional to $\norm{y}$. Thus, components with larger feature vectors should have larger outputs; this is borne out in experiments (see \S \ref{sec:featureVectorNorms})
Note that when calculating the norm of a feature vector for a computation path starting with a LayerNorm, one must multiply the norm by an estimated norm of the LayerNorm's input (see Appendix \ref{sec:layerNormGradInvProp} for explanation).

\vspace{-8pt}
\paragraph{Coupling coefficients}
An important question that we might want to ask about a model's behavior is the following: given two separate tasks, to what extent should we expect that the model will have a high output on one task whenever it has a high output on the other task? In other words, to what extent are the model's outputs on the two tasks \textit{coupled}?
Let us translate this problem into the language of feature vectors. If $n_1$ and $n_2$ are observables with feature vectors $y_1$ and $y_2$ for a function $f$, then for inputs $x$, we have $n_1 \cdot f(x) = y_1 \cdot x$ and $n_2 \cdot f(x) = y_2 \cdot x$. Now, if we constrain our input $x$ to have norm $c$, and constrain $x \cdot y_1 = k$, then what is the expected value of $x \cdot y_2$? And what are the maximum/minimum values of $x \cdot y_2$? We present the following theorem to provide theoretical grounding towards answering both questions:

\begin{theorem}
\label{thm:couplingCoeff}
Let $y_1, y_2 \in \mathbb{R}^d$. Let $x$ be uniformly distributed on the hypersphere defined by the constraints $\Vert x \Vert = s$ and $x \cdot y_1 = k$. Then we have \[\mathbb{E}[x \cdot y_2] = k \frac{y_1 \cdot y_2}{\Vert y_1 \Vert^2}\] and the maximum and minimum values of $x \cdot y_2$ are given by
\[ \frac{\Vert y_2 \Vert}{\Vert y_1 \Vert} \left(k\cos(\theta) \pm \sin(\theta) \sqrt{s^2 \Vert y_1 \Vert^2-k^2} \right)\]
where $\theta$ is the angle between $y_1$ and $y_2$.
\end{theorem}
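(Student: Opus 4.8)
The plan is to collapse the problem to a one-dimensional computation by splitting $y_2$ into the part aligned with $y_1$ and the part orthogonal to it. Write $y_2 = \alpha\, y_1 + y_2^{\perp}$ where $\alpha = \tfrac{y_1\cdot y_2}{\norm{y_1}^2}$ and $y_2^{\perp}\cdot y_1 = 0$. Every $x$ on the constraint manifold $M = \{x : \norm{x}=s,\ x\cdot y_1 = k\}$ satisfies $x\cdot y_1 = k$, so we get the pointwise identity $x\cdot y_2 = k\alpha + x\cdot y_2^{\perp}$. Hence both assertions of the theorem reduce to controlling the single scalar $x\cdot y_2^{\perp}$ on $M$ (throughout I assume $\norm{y_1}>0$ and $s^2\norm{y_1}^2 \ge k^2$, which is exactly the condition for $M$ to be nonempty, and $d\ge 2$, so that $M$ is a $(d-2)$-sphere).

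For the expectation, I would show $\expect{x\cdot y_2^{\perp}} = 0$ by a reflection symmetry. Consider the orthogonal map $R(x) = x - 2\,\tfrac{x\cdot y_2^{\perp}}{\norm{y_2^{\perp}}^2}\,y_2^{\perp}$ (reflection through the hyperplane normal to $y_2^{\perp}$); it is an isometry of $\mathbb{R}^d$, so it preserves $\norm{x}$, and since $y_2^{\perp}\perp y_1$ it also preserves $x\cdot y_1$. Thus $R$ maps $M$ bijectively onto itself and, being an isometry, preserves the normalized surface measure on $M$; but $R$ flips the sign of $x\cdot y_2^{\perp}$. Therefore $\expect{x\cdot y_2^{\perp}} = -\expect{x\cdot y_2^{\perp}}$, so it vanishes (the case $y_2^{\perp}=0$ is immediate), and $\expect{x\cdot y_2} = k\alpha = k\tfrac{y_1\cdot y_2}{\norm{y_1}^2}$.

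For the extrema, parametrize $x = \tfrac{k}{\norm{y_1}^2} y_1 + w$ with $w\cdot y_1 = 0$; the norm constraint forces $\norm{w}^2 = s^2 - k^2/\norm{y_1}^2 = (s^2\norm{y_1}^2 - k^2)/\norm{y_1}^2$, and $x\cdot y_2^{\perp} = w\cdot y_2^{\perp}$ because $y_1\cdot y_2^{\perp}=0$. As $x$ ranges over $M$, $w$ ranges over the full sphere of that radius inside the subspace $y_1^{\perp}$, which contains $y_2^{\perp}$; so by Cauchy--Schwarz $w\cdot y_2^{\perp}$ attains precisely the interval $[-\norm{w}\norm{y_2^{\perp}}, \norm{w}\norm{y_2^{\perp}}]$, the endpoints being realized at $w = \pm\norm{w}\,y_2^{\perp}/\norm{y_2^{\perp}}$. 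It then remains to rewrite in terms of $\theta$: from $y_2^{\perp} = y_2 - \alpha y_1$ and $y_1\cdot y_2 = \norm{y_1}\norm{y_2}\cos\theta$ one gets $\norm{y_2^{\perp}} = \norm{y_2}\sin\theta$ and $k\alpha = \tfrac{\norm{y_2}}{\norm{y_1}}k\cos\theta$; substituting $\norm{w} = \sqrt{s^2\norm{y_1}^2 - k^2}/\norm{y_1}$ into $k\alpha \pm \norm{w}\norm{y_2^{\perp}}$ gives exactly $\tfrac{\norm{y_2}}{\norm{y_1}}\!\left(k\cos\theta \pm \sin\theta\sqrt{s^2\norm{y_1}^2 - k^2}\right)$.

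I do not expect any step to be genuinely difficult; the argument is elementary linear algebra plus one symmetry observation. The only points that need care are (i) justifying that the reflection $R$ preserves the \emph{normalized surface measure} on the lower-dimensional manifold $M$ rather than merely ambient Lebesgue measure — which follows from $R$ being a global isometry carrying $M$ onto itself — and (ii) the degenerate cases $y_2 \parallel y_1$ (so $y_2^{\perp}=0$ and the spread is zero), $d<2$, and $k^2 = s^2\norm{y_1}^2$ (so $M$ is a single point and $\max = \min$). I would also note in passing that the same decomposition exhibits $x\cdot y_2$ as a fixed affine function of the scaled coordinate $w\cdot y_2^{\perp}/\norm{w}$, which is precisely what makes $y_1\cdot y_2/\norm{y_1}^2$ a natural \emph{coupling coefficient}.
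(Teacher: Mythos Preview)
Your proposal is correct and considerably cleaner than the route the paper takes. The paper works almost entirely in ``extrinsic'' geometry: it first reduces to $s=1$, identifies $M$ as a $(d-2)$-sphere centred at $c_1y_1$ with $c_1=k/\norm{y_1}^2$, then introduces the auxiliary point $c_2y_2$ with $c_2=k/(y_1\cdot y_2)$ lying in the same hyperplane. For the extrema it argues via the triangle inequality that the nearest and farthest points of $M$ to $c_2y_2$ lie on the line through $c_1y_1$ and $c_2y_2$, parametrises that line, solves a quadratic for its intersection with the unit sphere, and then computes $y_2\cdot x$ at those two points by brute algebra. For the expectation it proves a separate lemma (by induction on dimension) that the mean squared distance from a fixed point to a sphere equals the squared distance to the centre plus the squared radius, then uses the law of cosines to write $y_2\cdot x$ as an affine function of $\norm{x-c_2y_2}^2$ and applies the lemma.

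Your orthogonal decomposition $y_2=\alpha y_1+y_2^{\perp}$ short-circuits all of this: the expectation follows from a one-line reflection symmetry (no inductive lemma needed), and the extrema follow from Cauchy--Schwarz on the sphere $\{w\in y_1^{\perp}:\norm{w}=\mathrm{const}\}$ rather than from a nearest-point argument plus a quadratic. What the paper's approach buys is perhaps a more vivid geometric picture of \emph{where} the extremisers sit relative to $c_2y_2$, but your argument is shorter, avoids the auxiliary lemma entirely, and handles the general $s$ directly without a final rescaling step.
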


We denote the value $\frac{y_1 \cdot y_2}{\Vert y_1\Vert^2}$ by $C(y_1, y_2)$, and call it the \enquote{coupling coefficient from $y_1$ to $y_2$}. Intuitively, $C(y_1, y_2)$ measures the expected dot product between a vector and $y_2$, assuming that that vector has a dot product of $1$ with $y_1$. Additionally, note that Theorem \ref{thm:couplingCoeff} also implies that the coupling coefficient becomes a more accurate estimator as the cosine similarity between $y_1$ and $y_2$ increases.

Returning back to our original motivation, the coupling coefficient can be interpreted as estimating the constant of proportionality between a model's outputs on two tasks (where each task corresponds to an observable and a feature vector); the cosine similarity can be interpreted as quantifying the extent to which the model's outputs might deviate from this proportional relationship. In this manner, the coupling coefficient helps us predict the model's behavior on unseen tasks.

Note that while Theorem \ref{thm:couplingCoeff} does make the assumption that hidden states $x$ are spherically-symmetrically distributed, nevertheless, experimental evidence does bear out that the coupling coefficient is an accurate estimator of the expected constant of proportionality between activations of different features, with accuracy increasing as the cosine similarity increases; see \S \ref{sec:experimentalCosineSim} for results.

\section{Experiments}
\label{sec:experiments}
Armed with our \enquote{observable propagation} toolkit for obtaining and analyzing feature vectors, we now turn our attention to the problem of gender bias in LLMs in order to determine the extent to which these tools can be used to diagnose the causes of unwanted behavior.

\subsection{Gendered Pronouns Prediction}
\label{sec:genderedPronouns}

We first consider the related question of understanding how a large language model predicts gendered pronouns. Specifically, given a sentence prefix including a traditionally=gendered name (for example, \enquote{Mike} is often associated with males and \enquote{Jane} is often associated with females), how does the model predict what kind of pronoun should come after the sentence prefix? We will later see that understanding the mechanisms driving the model's behavior on this benign task will yield insights for understanding gender-biased behavior of the model. Additionally, this investigation also provides an opportunity to test the ability of \ourmethod to accurately predict model behavior.

The gendered pronoun prediction problem was previously considered by \citet{genderCircuits}, where the authors used the \enquote{Automated Circuit Discovery} tool presented by \citet{conmy2023towards} to investigate the flow of information between different components of GPT-2-small \citep{radford2019language} in predicting subject pronouns (i.e. \enquote{he}, \enquote{she}, etc). We extend the problem setting in various ways. We investigate both the subject pronoun case (in which the model is to predict the token \enquote{she} versus \enquote{he}) and the object pronoun case (in which the model is to predict \enquote{her} versus \enquote{him}). Additionally, we seek to understand the underlying features responsible for this task, rather than just the model components involved, so that we can compare these features with the features that the model uses in producing gender-biased output.

\paragraph{Problem setting}

\newcommand{\nsubj}{n_{\text{subj}}}
\newcommand{\nobj}{n_{\text{obj}}}
We consider two observables, corresponding to the subject pronoun prediction task and the object pronoun prediction task. The observable for the subject pronoun task, $\nsubj$, is given by $e_{\text{\token{ she}}} - e_{\text{\token{ he}}}$, where $e_{\text{token}}$ is the one-hot vector with a one in the position corresponding to the token \textit{token}. This corresponds to the logit difference between the tokens \token{ she} and \token{ he}, and indicates how likely the model predicts the next token to be \token{ she} versus \token{ he}. Similarly, the observable for the object pronoun task, $\nobj$, is given by $e_{\text{\token{ her}}} - e_{\text{\token{ him}}}$.

We investigate the model \texttt{GPT-Neo-1.3B} \citep{gpt-neo}, which has approximately 1.3B parameters, 24 layers, 16 attention heads, an embedding dimension of 2048, and an MLP hidden dimension of 8192.
Note that \ourmethod is able to work with models that are significantly larger than those previously explored, such as GPT-2-small (117M parameters) \citep{radford2019language}, which has been the focus of recent interpretability work by \cite{wang2022interpretability}, inter alia. 

Additionally, a note on notation. The attention head with index $h$ at layer $l$ will be presented as \enquote{$l$::$h$}. For instance, 17::14 refers to attention head 14 at layer 17. Furthermore, the MLP at layer $L$ will be presented as \enquote{mlp$L$}. For instance, mlp1 refers to the MLP at layer 1.

\paragraph{Feature vector norms for single attention heads}
\label{sec:featureVectorNorms}
We begin by analyzing the norms for the feature vectors corresponding to $\nsubj$ and $\nobj$ for each attention head in the model. We then used path patching \citep{goldowsky2023localizing} to measure the mean degree to which each attention head contributes to the model's output on dataset of male/female prompt pairs. If our method is effective, then we would expect to see that the heads with the greatest feature norms are those identified by path patching as most important to model behavior. The results are given in Table~\ref{tab:featureNorms}.

\begin{table*}[t]
\small
    \centering
    \begin{tabular}{@{}cc|cccccc|ccccc@{}}
    \toprule
         Observable &&&  \multicolumn{4}{@{}c@{}}{Heads with greatest feature norms}& && \multicolumn{4}{@{}c@{}}{Feature vector norms}\\ \midrule
        $\nsubj$ &&& 18::11& \textbf{17::14}& \textbf{13::11}& \textbf{15::13} &&& 237.3& 236.2& 186.4& 145.4\\
        $\nobj$ &&& \textbf{17::14}& 18::11& \textbf{13::11}& \textbf{15::13} &&& 159.2& 157.0& 145.0& 112.3\\ \midrule
         Observable &&&  \multicolumn{4}{@{}c@{}}{Heads with greatest attributions}&&&  \multicolumn{4}{@{}c@{}}{Path patching attributions}\\ \midrule        
        $\nsubj$ &&& \textbf{17::14}& \textbf{13::11}& \textbf{15::13}& 13::3 &&& 5.004 & 3.050 & 1.199 & 0.584 \\
        $\nobj$ &&& \textbf{17::14}& \textbf{13::11}& \textbf{15::13} & 22::2 &&& 2.949 & 1.885 & 1.863 & 0.365 \\ \bottomrule
    \end{tabular}
    \caption{The four attention heads with the greatest feature norms and path patching attributions (corrupted-clean logit differences) for both the $\nsubj$ and $\nobj$ observables. $\nsubj$ is the observable measuring the difference between the logits for \token{ she} and \token{ he}; $\nobj$ is the observable measuring the difference between the logits for \token{ her} and \token{ him}. \enquote{$l$::$k$} denotes the attention head with index $k$ at layer with index $l$. \enquote{Feature vector norms} refers to the norm of the feature vector associated with the attention head; \enquote{Path patching attributions} refers to the difference between the model's output for the given observable when the given attention head's activations was patched, and the model's output for that given observable when the attention head was not patched.}
    \label{tab:featureNorms}
\end{table*}

We see that three of the four attention heads with the highest feature norms -- that is, 17::14, 15::13, and 13::11 -- also have very high attributions for both the subject and object pronoun cases. (Interestingly, head 18::11 does not have a high attribution in either case despite having a large feature norm; this may be due to effects involving the model's QK circuit.)
This indicates that observable propagation was largely successful in being able to predict the most important attention heads, despite only using one forward pass per observable (to estimate LayerNorm gradients).

\paragraph{Cosine similarities and coupling coefficients}
\label{sec:experimentalCosineSim}
Next, we investigated the cosine similarities between feature vectors for $\nsubj$ and $\nobj$. We found that the four heads with the highest cosine similarities between its $\nsubj$ feature vector and its $\nobj$ feature vector are 17::14, 18::11, 15::13, and 13::11, with cosine similarities of 0.9882, 0.9831, 0.9816, 0.9352. The high cosine similarities of these feature vectors indicates that the model uses the same underlying features for both the task of predicting subject pronoun genders and the task of predicting object pronoun genders.

We also looked at the feature vectors for the computational paths {6::6$\to$9::1$\to$13::11} for $\nsubj$ and {6::6$\to$13::11} for $\nobj$, because performing path patching on a pair of prompts suggested that these computational paths were relevant. The feature vectors for these paths had a cosine similarity of 0.9521.

We then computed the coupling coefficients between the $\nsubj$ and $\nobj$ feature vectors for heads 17::14, 15::13, and 13::11. This is because these heads were present among the heads with the highest cosine similarities, highest feature norms, and highest patching attributions, for both the $\nsubj$ and $\nobj$ cases. After this, we tested the extent to which the coupling coefficients accurately predicted the constant of proportionality between the dot products of different feature vectors with their inputs. We ran the model on approximately 1M tokens taken from The Pile dataset \citep{gao2020pile} and recorded the dot product of each token's embedding with these feature vectors. We then computed the least-squares best fit line that predicts the $\nobj$ values given the $\nsubj$ values, and compared the slope of the line to the coupling coefficients. The results are given in Table \ref{tab:empiricalCouplingDots}. We find that the coupling coefficients are accurate estimators of the empirical dot products between feature vectors and that, in accordance with Theorem \ref{thm:couplingCoeff}, the dot products between vectors with greater cosine similarity exhibited greater correlation.

\begin{table}[]
\small
    \centering
    \begin{tabular}{@{}ccccc@{}}
    \toprule
        Head & \makecell{Coupling\\ coefficient} & \makecell{Cosine \\ similarity} & Best-fit slope & $r^2$ \\ \midrule 
        17::14 & $0.7123$ & $0.9882$ & $0.7692$ & $0.9567$ \\
        15::13 & $0.8011$ & $0.9816$ & $0.8003$ & $0.9523$ \\
        13::11 & $0.7478$ & $0.9352$ & $0.7632$ & $0.8189$ \\
        6::6$\to$... & -- & $0.9521$ & -- & $0.8613$ \\ \bottomrule
    \end{tabular}
    \caption{Coupling coefficients and cosine similarity, compared to the slope of the best-fit line for empirical dot products with feature vectors of $\nsubj$ versus $\nobj$. Note that for the {6::6$\to$...} feature vectors, we do not investigate coupling coefficients, because these earlier-layer attention heads are involved in many computational paths, so the magnitudes obtained for these feature vectors along one computational path do not reflect the importance along the sum total of computational paths.}
    \label{tab:empiricalCouplingDots}
\end{table}

\subsection{Occupational Gender Bias}
\label{sec:genderBias}
Now that we have understood some of the features relevant to predicting gendered pronous, we more directly consider the setting of occupational gender bias in language models, a widely-investigated problem \citep{bolukbasi2016man, vig2020investigating}. For a prompt like \texttt{"My friend [NAME] is an excellent $...$"}, an LM which hasn't been aligned using e.g. RLHF \citep{ouyang2022training} is more likely to predict that the next token is \token{ programmer} than \token{ nurse} if \texttt{[NAME]} is replaced with a male name, and vice-versa for a female name \citep{gpt3}. We applied observable propagation to the problem in order to go beyond prior work and understand the features responsible for this behavior. In particular, we considered the observable $n_{\text{bias}}= (e_{\text{\token{ nurse}}} + e_{\text{\token{ teacher}}} + e_{\text{\token{ secretary}}}) - (e_{\text{\token{ programmer}}} + e_{\text{\token{ engineer}}} + e_{\text{\token{ doctor}}})$; this observable represents the extent to which the model predicts stereotypically-female occupations instead of stereotypically-male ones.

\newcommand{\nbias}{n_{\text{bias}}}

\textbf{The same features are used to predict gendered pronouns and occupations\hspace{1em}}
We ran path patching on a single pair of prompts in order to determine computational paths relevant to $\nbias$. The results were computational paths beginning with {mlp1$\to$6::6$\to$9::1$\to$...} and {6::6$\to$9::1$\to$...}, which began on the token in the prompt associated with the gendered name. Even though there were many relevant computational paths beginning with these prefixes, and even though these computational paths passed through multiple later-layer MLPs, the feature vectors for these different paths nevertheless had high cosine similarity with one another.

More surprising is that the feature vector for $\nbias$ for {6::6$\to$9::1$\to$...} had a cosine similarity of 0.966 with the feature vector for $\nsubj$ for {6::6$\to$9::1$\to$13::11}. Similarly, the $\nbias$ feature vector for {mlp1$\to$6::6$\to$9::1$\to$...} had a cosine similarity of 0.977 with the $\nsubj$ feature vector for {mlp1$\to$6::6$\to$9::1$\to$13::11}. This indicates that \textit{the model uses the same features to identify both gendered pronouns and likely occupations, given a traditionally-gendered name}.

To determine the extent to which these feature vectors reflected model behavior, we ran the model on an artificial dataset of 600 prompts involving gendered names (see Appendix \ref{sec:datasets}), recorded the dot product of the model's activations on the name token with the feature vectors, and recorded the model's output with respect to the observables. The results can be found in Figure \ref{fig:occuFeaturesVsObservables}. Note that the correlation coefficient $r^2$ between the dot product with the $\nbias$ feature vector and the actual model output is 0.88, indicating that the feature vector is a very good predictor of model output.

\captionsetup[subfigure]{singlelinecheck=off, skip=-.6cm}
\begin{figure*}
    \centering
    \begin{subfigure}[b]{0.4\textwidth}
    \centering
        \includegraphics[width=\textwidth]{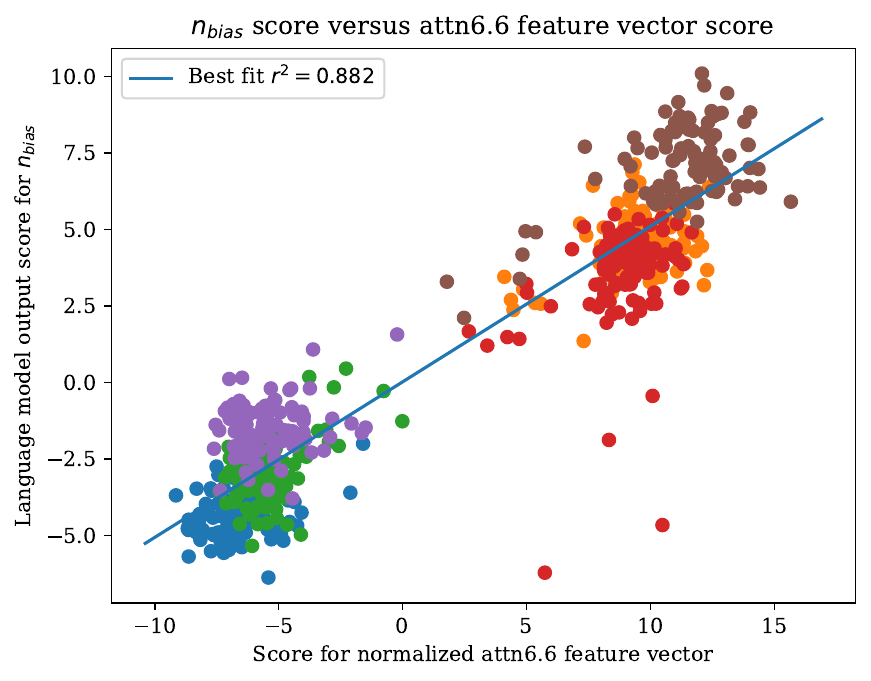}
        \caption{}
    \end{subfigure}
    \hfill
    \centering
    \begin{subfigure}[b]{0.4\textwidth}
    \centering
        \includegraphics[width=\textwidth]{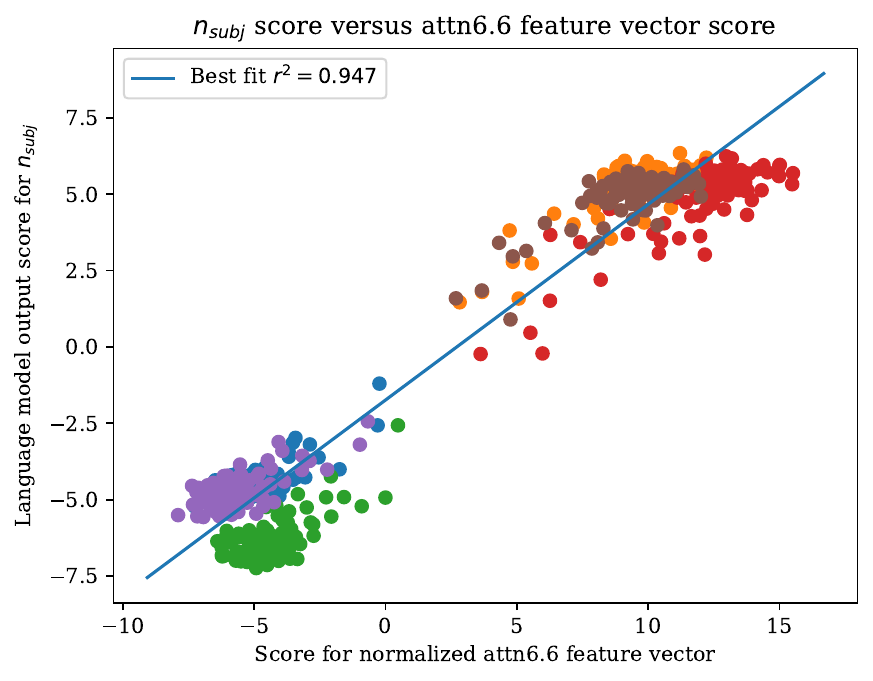}
        \caption{}
     \end{subfigure}
    \caption{
    The dot product of model activations with (normalized) feature vectors, compared to the model's output for observables.
    (a) Dot products with the $\nbias$ feature vector for {6::6$\to$9::1$\to$...}, versus the model's output with respect to $\nbias$. (b) Dot products with the $\nsubj$ feature vector for {6::6$\to$9::1$\to$13::11}, versus the model's output w.r.t $\nsubj$.\vspace{-12pt}} 
    \label{fig:occuFeaturesVsObservables}
\end{figure*}

We then investigated the tokens in a 1M-token subset of The Pile that maximally activated the $\nbias$ feature vector\footnote{If $y$ is a feature vector, then the maximally-activating tokens for $y$ are given by $\textrm{argmax}_i (y^T x_i)$ where $x_i$ is the model's hidden state on the $i$-th token in the dataset.}. These tokens were primarily female names: tokens like \token{ Rita}, \token{ Catherine}, and \token{ Mary}, along with female name suffixes like \token{a} (as in \enquote{Phillipa}), \token{ine} (as in \enquote{Josephine}), and \token{ia} (as in \enquote{Antonia}). Surprisingly, the least-activating tokens were generally male common nouns, such as \token{ husband}, \token{ brother}, and \token{ son} -- but also words like \token{ his}, and even \token{ male}.
This evidence even further supports the hypothesis that the model specifically uses gendered features in order to determine which occupations are most likely to be associated with a name. However, it is worth noting that part of the power of \ourmethod is that it allows us to test hypotheses such as this without needing to run the model on large datasets and record the tokens with the highest feature vector activations: simply by virtue of the extremely high cosine similarity between the $\nsubj$ feature vector and the $\nbias$ feature vector, we could infer that the model was using gendered information to predict occupations. As such, looking at the maximally-activating tokens primarily served as a \enquote{sanity check}, verifying that the feature vectors returned by \ourmethod are human-interpretable.

\subsection{Quantitative Analysis Across Observables}
\label{sec:quantitative}
We now evaluate \ourmethod's performance across a broader variety of tasks, including subject pronoun prediction, identifying American politicians' party affiliations, and distinguishing between C and Python code.
We use \ourmethod to find feature vectors for each task; for comparison, we also find feature vectors using the more data-intensive methods of linear/logistic regression and mean difference, standard methods used by \citet{pmlr-v80-kim18d}, \citet{tigges2023linear} and many others. For the pronoun prediction task, we use the same artificial dataset used in the subject pronoun prediction experiments; for the political affiliation task, we use an artificial dataset comprised of 40 Democratic politicians' and 40 Republican politicians' names and consider the model's logit difference between the tokens \token{ Democrat} and \token{ Republican}. For the programming language classification task, we use a natural dataset of code.
For each feature vector $y$, we look at the dot product $y^T x$ across inputs $x$. For the former two tasks, we evaluate the correlation between $y^T x$ and the model's output; on the latter task, we apply the \enquote{AUC-ROC} metric to evaluate the accuracy of $y^T x$ in differentiating between C and Python code.

The results are given in Table \ref{tab:quantitative}. For the subject pronoun prediction task, in order for the feature vector found by linear regression to match the performance of the \ourmethod feature vector, 60 prompts' worth of embeddings had to be used for training; similarly, for the C vs. Python classification task, the logistic regression had to be trained on 50 code snippets' worth of embeddings to obtain equal performance. In the political party prediction task, even when training on 3/4 of the dataset, the linear regression feature vector's performance on the test set was well below that of the \ourmethod feature vector's performance on the whole dataset. This suggests the ability of \ourmethod to match the performance of prior methods for finding feature vectors, and outcompete them especially in the low-data regime. For more details on these experiments, refer to Appendix \ref{sec:appendixQuant}.

\begin{table*}[]
\small
    \centering
    \begin{tabular}{@{}lcccc@{}}
    \toprule
        Task & \ourmethod & \makecell{Logistic regression \\ (trained)} & \makecell{Mean difference \\ vector (trained)} & \makecell{Training \\ set size} \\ \midrule 
        Subject pronouns & $r^2 \approx 0.945$ & $r^2 \approx 0.945$ & $r^2 \approx 0.899$ & 60 prompts \\
        Political parties & $r^2 \approx 0.427$ & $r^2 \approx 0.295$ & $r^2 \approx 0.0605$ & 60 prompts (3/4 of dataset) \\
        C vs. Python & AUC $\approx0.9974$ & AUC $\approx0.9971$ & AUC $\approx0.9052$ & 50 code snippets \\ \bottomrule
    \end{tabular}
    \caption{Accuracy of regression-derived feature vectors vs. \ourmethod feature vectors.}
    \label{tab:quantitative}
\end{table*}

\section{Conclusion and Discussion}
In this paper, we introduced observable propagation (or \ourmethod for short), a novel method for finding feature vectors in transformer models using little to no data. We developed a theory for analyzing the feature vectors yielded by \ourmethod, and demonstrated this method's utility for understanding the internal computations carried out by a model. In our case studies, we found that investigating the norms of feature vectors obtained via \ourmethod could be used to predict relevant attention heads for a task without actually running the model on any data; that \ourmethod can be used to understand when two different tasks utilize the same feature; that coupling coefficients can be used to show the extent to which a high output for one observable implies a high output for another on a general distribution of data; and that the feature vectors returned by \ourmethod accurately predict model behavior. We also demonstrated that in data-scarce settings, \ourmethod outperforms traditional data-heavy probing approaches for finding feature vectors.

This culminated in a demonstration that the model specifically uses the feature of \enquote{gender} to predict the occupation associated with a name. Notably, even though experiments on larger datasets further supported this claim, observable propagation alone was able to provide striking evidence of this using minimal amounts of data. We hope that our approach, being independent of data, can democratize interpretability research and facilitate broader-scale investigations.

Furthermore, the conclusion that the model uses the same mechanisms to predict grammatical gender as it does to predict occupations portends difficulties in attempting to \enquote{debias} the model. This means that inexpensive inference-time attempts to remove bias from the model
will likely also decrease model performance on desired tasks like correct gendered pronoun prediction (see Appendix \ref{sec:furtherDebiasing} for additional experiments.) This reveals a clear future work direction to invest in more powerful methods, to ensure that models are both unbiased and useful.

Note that although \ourmethod demonstrates significant promise in cheaply unlocking the internal computations of language models, it does have limitations. In particular, \ourmethod currently primarily addresses the OV circuits of Transformers, ignoring computations in QK circuits responsible for mechanisms such as \enquote{induction heads} \citep{elhage2021mathematical}. However, even though QK circuits are responsible for moving information around in Transformers, OV circuits are where computation on this information occurs. Thus, whenever we want to understand what sort of information the model uses to predict one token as opposed to another, the answer to this question lies in the model’s OV circuits, and \ourmethod can provide such answers.

Given the power that the current formulation of \ourmethod has demonstrated already in our experiments, we are very excited about the potential for this method, and methods building upon it, to yield even greater insights in the near future.

\paragraph{A note on SAEs} Sparse autoencoders (SAEs), as described by \citet{cunningham2023sparse} and \citet{bricken2023monosemanticity}, have recently made waves in the mechanistic interpretability community. SAEs can be trained unsupervised on model hidden states in order to yield a set of feature vectors that can represent any hidden state as a sparse linear combination of these feature vectors. Although we do find SAEs to be very promising, we fear that SAEs might incur some philosophical risks by assuming the existence of a set of \enquote{ground-truth features} that can be found. Plausibly, SAEs might fail to account for cases where later-layer features depend on a dense subset of earlier-layer features or where feature vectors are composed in unintuitive ways to compute a task. As such, \ourmethod takes a different approach that privileges \textit{fidelity to computation over data}: \ourmethod aims to find feature vectors (approximately) corresponding to the \textit{computation} of human-interpretable tasks, the behavior of which feature vectors can then be quantifiably understood. In this manner, we hope to provide a complementary approach to the SAE paradigm (and one that could perhaps be integrated with it) in order to account for potential shortcomings of the latter.

\section*{Supplementary Statements}
\subsection*{Impact Statement}
In this work, we present observable propagation, our method for finding feature vectors used by large language models in their computation of a given task. We demonstrate in an experiment that observable propagation can be used to pin-point specific features that are responsible for gender bias in large language models, suggesting that observable propagation might prove to be useful in mechanistically understanding how to debias language models. Additionally, the data-efficient nature of observable propagation allows this sort of inquiry into model bias to be democratized, conducted by researchers who might not have access to compute or data required by other methods. However, it is important to note that observable propagation does not necessarily make perfect judgments about model bias or lack thereof; a model might be biased even if observable propagation fails to find specific feature vectors responsible for that bias. As such, it is incumbent upon researchers, practitioners, and organizations working with large language models to continue to perform deeper investigations into model bias issues, and be aware of the way in which it might affect their results.

\subsection*{Reproducibility Statement}
A proof of Theorem \ref{thm:LayerNorm} is given in Appendix \ref{sec:LayerNormProof}; a proof of Theorem \ref{thm:couplingCoeff} is given in Appendix \ref{sec:couplingCoeffProof}. Details on the datasets that we used in our experiments can be found in Appendix \ref{sec:datasets}. Further details regarding the experiments in Section \ref{sec:quantitative} can be found in Appendix \ref{sec:appendixQuant}. Details on how we chose the $x_0$ point used to approximate nonlinearities (as described in \S \ref{sec:nonlinear}) can be found in Appendix \ref{sec:gradientMLPs}; for LayerNorm linear approximations, we used the estimation method described in Appendix \ref{sec:layerNormGradInvProp}. Code is available at \url{https://github.com/jacobdunefsky/ObservablePropagation}.

\bibliography{refs}
\bibliographystyle{icml2024}

\newpage
\appendix
\onecolumn

\section{Datasets}
\label{sec:datasets}
In our experiments, we made use of an artificial dataset, along with a natural dataset. The natural dataset was processed by taking the first 1,000,111 tokens of The Pile \citep{gao2020pile} and then splitting them into prompts of length at most 128 tokens. This yielded 7,680 prompts.

To construct the artificial dataset, we wrote three prompt templates for the $\nsubj$ observable, three prompt templates for the $\nbias$ observable, and three prompt templates for the $\nobj$ observable. The prompt templates are as follows:

\begin{itemize}
    \item Prompt templates for $\nsubj$ (inspired by \cite{genderCircuits}):
    \begin{enumerate}
        \item \texttt{"<|endoftext|>So, [NAME] really is a great friend, isn't"}
        \item \texttt{"<|endoftext|>Man, [NAME] is so funny, isn't"}
        \item \texttt{"<|endoftext|>Really, [NAME] always works so hard, doesn't"}
    \end{enumerate}
    \item Prompt templates for $\nobj$:
    \begin{enumerate}
        \item \texttt{"<|endoftext|>What do I think about [NAME]? Well, to be honest, I love"}
        \item \texttt{"<|endoftext|>When it comes to [NAME], I gotta say, I really hate"}
        \item \texttt{"<|endoftext|>This is a present for [NAME]. Tomorrow, I'm gonna give it to"}
    \end{enumerate}
    \item Prompt templates for $\nbias$:
    \begin{enumerate}
        \item \texttt{"<|endoftext|>My friend [NAME] is an excellent"}
        \item \texttt{"<|endoftext|>Recently, [NAME] has been recognized as a great"}
        \item \texttt{"<|endoftext|>His cousin [NAME] works hard at being a great"}
    \end{enumerate}
\end{itemize}

A dataset of prompts was then generated by replacing the \texttt{[NAME]} substring in each prompt template with a name from a set of traditionally-male names and a set of traditionally-female names.
These names were obtained from the \enquote{Gender by Name} dataset from \cite{genderByName}, which provided a list of names, the gender traditionally associated with each name, and a measure of the frequency of each name. The top 100 single-token traditionally-male names and top 100 single-token traditionally-female names from this dataset were collected; this comprised the list of names that we used.

\section{Experimental Details for Section \ref{sec:quantitative}}
\label{sec:appendixQuant}

\subsection{Datasets}
The dataset used in the subject pronoun prediction task is the same artificial dataset described in Appendix \ref{sec:datasets}.

The dataset used in the C vs. Python classification task consists of 730 code snippets, each 128 tokens long, taken from C and Python subsets of the GitHub component of The Pile \citep{gao2020pile}.

The dataset used in the American political party prediction task is an artificial dataset consisting of prompts of the form \texttt{"[NAME] is a"}, where \texttt{[NAME]} is replaced by the name of a politician drawn from a list of 40 Democratic Party politicians and 40 Republican Party politicians. These politicians were chosen according to the list of \enquote{the most famous Democrats} and \enquote{the most famous Republicans} for Q3 2023 compiled by YouGov, available at \hyperlink{https://today.yougov.com/ratings/politics/fame/Democrats/all}{https://today.yougov.com/ratings/politics/fame/Democrats/all} and \hyperlink{https://today.yougov.com/ratings/politics/fame/Democrats/all}{https://today.yougov.com/ratings/politics/fame/Democrats/all}. The intuition behind this choice of dataset is that the model would be more likely to identify the political affiliation of well-known politicians, because better-known politicians would be more likely to occur in its training data. This is the primary reason that a smaller dataset is being used.

\subsection{Task Definition}
The subject pronoun prediction task involves the model predicting the correct token for each prompt. The target scores are considered to be the difference between the model's logit prediction for the token \texttt{" she"} and the model's logit prediction for the token \texttt{" he"}.

The political party prediction task also involves the model predicting the correct token for each prompt. The target scores are considered to be the difference between the model's logit prediction for the token \texttt{" Democrat"} and the model's logit prediction for the token \texttt{" Republican"}.

For the C vs. Python classification task, because the data is drawn from a diverse corpus of code, the task is treated as a binary classification task instead of a token prediction task.

\subsection{Feature Vectors}
The \ourmethod feature vector used for the pronoun prediction task is the feature vector corresponding to the computational path 6::6 $\to$ 9::1 $\to$ 13::1 for the $\nsubj$ observable.

The \ourmethod feature vector used for the political party prediction task is the feature vector corresponding to attention head 15::8 for the observable defined by $e_{\token{ Democrat}} - e_{\token{ Republican}}$.

The \ourmethod feature vector used for the C versus Python classification task is the feature vector corresponding to attention head 16::9 for the observable defined by $e_{\token{ ):}} - e_{\token{ )\{}}$. (The intuition behind this observable is that in Python, function definitions look like \texttt{def foo(bar, baz):}, whereas in C, function definitions look like \texttt{int foo(float bar, char* baz)\{}. Notice how the former line ends in the token \token{):} whereas the latter line ends in the token \token{)\{}.)

The regression feature vectors for each task were trained on model embeddings at the same layer as the \ourmethod feature vectors for that task. Thus, for example, the linear regression feature vector for the pronoun prediction task was trained on model embeddings at layer 6.

The \enquote{mean difference} feature vectors for each task were calculated as follows. First, run the model on inputs from one class (e.g. female names, Democratic politicians, C code) and compute the mean vector across these inputs of the model embeddings at the same layer as the \ourmethod feature vector. Then, run the model on inputs from the other class (e.g. male names, Republican politicians, Python code) and compute the mean vector. Now, the \enquote{mean difference} feature vector is simply the difference between these two mean vectors.

\subsection{Task Evaluation}
For the pronoun prediction task, the predicted score was determined as the dot product of the feature vector with the model's embedding at layer 6 for the name token in the prompt.

For the political party prediction task, the predicted score was determined as the dot product of the feature vector with the model's embedding at layer 15 for the last token in the politician's name in each prompt.

For the C versus Python classification task, the predicted score for each code snippet was determined by taking the mean of the model's embeddings at layer 16 for all tokens in the code snippet, and then taking the dot product of the feature vector with those mean embeddings.

\section{Details on Linear Approximations for MLPs}
\label{sec:gradientMLPs}
Finding feature vectors for MLPs is a relatively straightforward application of the first-order Taylor approximation. However, there is a fear that if one takes the gradient at the wrong point, then the local gradient will not reflect well the larger-scale behavior of the MLP. For example, the output of the MLP with respect to a given observable might be \textit{saturated} at a certain point: the gradient at this point might be very small, and might even point in a direction inconsistent with the MLP's gradient in the unsaturated regime.

To alleviate this, we use the following method. Define $g(x) = n^T \operatorname{MLP}(x)$, where $n$ is a given observable. If this observable $n$ represents the logit difference between two tokens, then we should be able to find an input on which this difference is very negative, along with an input on which this difference is very positive. For example, if $n$ represents the logit difference between the token \token{ her} and the token \token{ him}, then an input containing a male name should make this difference very negative, and an input containing a female name should cause this difference to be very positive.

Thus, we have two points $x_{-}$ and $x_{+}$ such that $g(x_{-}) < 0$ and $g(x_{+}) > 0$. Since MLPs are continuous, there therefore must be some point $x_0$ on the line between $x_{-}$ and $x_{+}$ at which $g(x_0) = 0$; this point lies at the intersection of the line between $x_{-}$ and $x_{+}$ with the MLP's decision boundary. It stands to reason that the gradient at this decision boundary is more likely to capture the larger-scale behavior of the MLP and is less likely to be saturated, when compared to the gradient at more \enquote{extreme} points like $x_{-}$ and $x_{+}$. Such an $x_0$ can be found using constrained optimization methods; we use the Python library MDMM \citep{mdmm} to do so.

This approach given here is used in Line \ref{line:nonlinearLine} of Algorithm \ref{alg:observablePropagation} for dealing with MLP nonlinearities. For LayerNorms, we simply take the gradient at an input point $x_{-}$ or $x_{+}$. 

\section{Details on Path Patching for Finding Important Computational Paths}
In Section \ref{sec:experiments}, we use path patching \citep{goldowsky2023localizing} to determine which computational paths in the model are most important for a given task. Here, we provide more details on our implementation of path patching to measure the importance of a single computational path, along with more details on how we used path patching to find a set of important computational paths.

\subsection{Path patching}
Path patching is a causal method for determining the importance of a computational path in the model according to a given metric. This metric can generally be any function of the model's outputs (e.g. the metric could be the cross-entropy next-token-prediction loss), but in this paper, our metrics are the dot product of the model's logits with a given observable. We thus use path patching for determining how important a computational path is to the observable corresponding to a given task.

The high-level approach to path patching (which is common to other causal methods) is as follows. We consider two inputs to the model, a \textit{clean} input and a \textit{dirty} input, which display different behavior with respect to the metric. (For instance, if the metric is the dot product of the model's logits with the $\nsubj$ observable, then the clean input would be a prompt containing a traditionally-male name, and the dirty input would be a prompt containing a traditionally-female name.) First, run the model on both the clean and the dirty input, storing the model's hidden states on both inputs. Then, take the hidden states from the clean run, \textit{replace certain hidden states with the corresponding ones from the dirty run}, and re-run the model on this modified set of hidden states. After doing so, measure the difference, with respect to the given metric, between the model's output on the clean input and the model's output using the modified hidden states.

Our implementation of path patching is slightly different from the implementation described in \citet{goldowsky2023localizing}: the implementation described here avoids the costly \enquote{Treeify} function from the original.

First, we will describe how to perform path patching for a single-edge path. Given an earlier-layer component\footnote{A component is an attention head or an MLP sublayer} $c$ and a later-layer component $c'$, let $o_{\text{clean}}$ be the output of $c$ on the \textit{clean input}, and let $x'_{\text{clean}}$ be the hidden states on the clean input before $c'$. Similarly, let $o_{\text{dirty}}$ be the output of $c$ on the \textit{dirty input}. Now, as explained by \citet{elhage2021mathematical}, a transformer's hidden state before any component can be decomposed as the sum of all previous components' outputs (along with the original token embedding and positional embedding). Thus, to measure the direct effect of the computational edge from $c$ to $c'$, we replace $x'_{\text{clean}}$ with $x'_{\text{clean}} - o_{\text{clean}} + o_{\text{dirty}}$. This corresponds to replacing the contribution of $c$ to the input of $c'$ with the output of $c$ on the dirty input. The path patching score is then computed by running the model with this modified hidden state and measuring the difference between its output and the clean output.

Now, this can be extended to longer computational paths as follows. Given a computational path of components $c^{(1)}, \dots, c^{(k)}$, let $x^{(i)}_{\text{clean}}$ be the hidden states on the clean input before $c^{(i)}$, let $o^{(i)}_{\text{clean}}$ be the output of $c^{(i)}$ on the clean input, and let $o^{(i)}_{\text{dirty}}$ be the output of $c^{(i)}$ on the dirty input. Just as before, replace $x^{(2)}_{\text{clean}}$ with $x^{(2)}_{\text{dirty}} = x^{(2)}_{\text{clean}} - o^{(1)}_{\text{clean}} + o^{(1)}_{\text{dirty}}$. Run $c^{(2)}$ on $x^{(2)}_{\text{dirty}}$ and store the output as $o^{(2)}_{\text{dirty}}$. Repeat this process for the later components: replace $x^{(i)}_{\text{clean}}$ with $x^{(i)}_{\text{dirty}} = x^{(i)}_{\text{clean}} - o^{(i-1)}_{\text{clean}} + o^{(i-1)}_{\text{dirty}}$, run $c^{(i)}$ on $x^{(i)}_{\text{dirty}}$, and store the output as $o^{(i)}_{\text{dirty}}$. The path patching score is then given by the difference in the model's output on these dirty hidden states and the model's original output on the clean input.

\subsection{Finding Important Computational Paths}
Now that we can use path patching to allow us to determine the importance of a given computational path, we use the following greedy method in conjunction with path patching to find a set of important computational paths.

First, use path patching to identify the $k$ most important single-edge computational paths. Then, for each of those paths, identify the $k$ most important paths with the current path as a suffix; this gives us $k^2$ total paths. Now, take the top $k$ paths from these $k^2$ total paths, and repeat for as many iterations as one needs (until one has paths of a desired length).

The complexity of this process is $O(n k (pm+m\log m) )$, where $n$ is the number of iterations, $k$ is the number of paths, $m$ is the number of nodes in the full computational graph of the model, and $p$ is the cost of path patching for one computational path. (To see this: at each iteration, we do path patching on $k$ parent paths, which has cost $p$ for each computational node $m$ in the model. Then, we sort the $m$ nodes to get the top $k$ paths, which gives us the $m\log m$ term.)

Note that this process can be made more efficient by using faster alternatives to path patching such as edge attribution patching \citep{syed2023attribution}.

\section{More on LayerNorms}
\label{sec:empiricalLayerNorm}
In this section, we put forth various results relevant for the discussion of LayerNorm gradients in \S \ref{sec:LayerNorm}.

\subsection{LayerNorm Input Norms per Layer}
\label{sec:layerNormInputNorms}
We calculated the average norms of inputs to each LayerNorm sublayer in the model, over the activations obtained from the 600 subject pronoun prompts in the artificial dataset described in \S \ref{sec:datasets}. The results can be found in Figure \ref{fig:layerNormNorms}. The wide variation in the input norms across different layers implies that input norms must be taken into account in any approximation of LayerNorm gradients.

\begin{figure}
    \centering
    \includegraphics[width=0.5\textwidth]{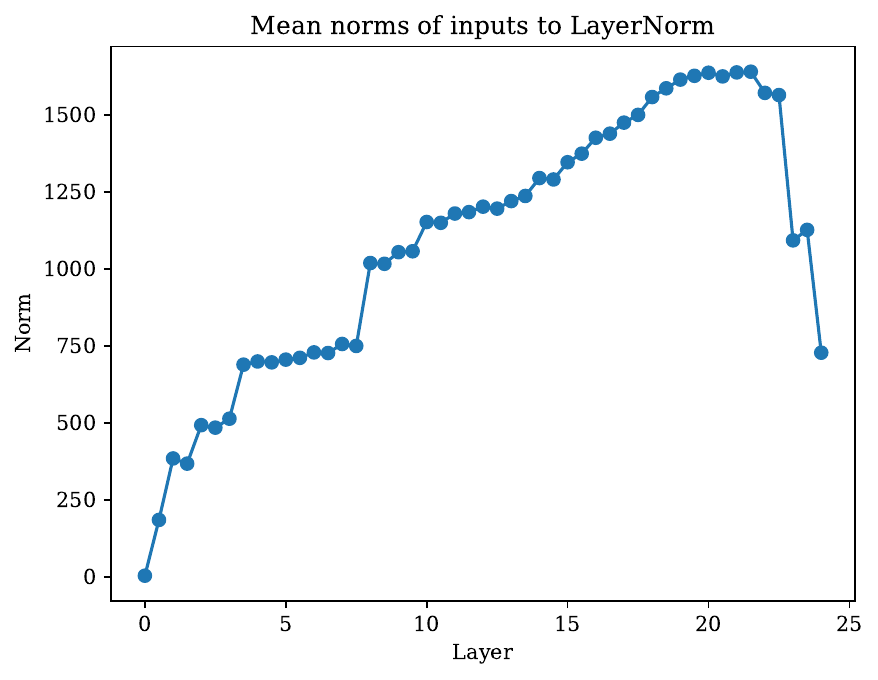}
    \caption{Mean norms of activations before each LayerNorm}
    \label{fig:layerNormNorms}
\end{figure}

\subsection{LayerNorm Weight Values Are Very Similar}
\label{sec:layerNormVariance}
In \S\ref{sec:LayerNorm}, the LayerNorm nonlinearity is defined as $\operatorname{LayerNorm}(x) = \frac{x-(\vec{1}^Tx)\vec{1}}{\norm{x-(\vec{1}^Tx)\vec{1}}} + b$, where $\vec{1}$ is the vector of all ones. However, in actual models, after every LayerNorm operation as defined above, the output is multiplied by a fixed scalar constant equal to $\sqrt{d}$ (where $d$ is the embedding diension), multiplied by a learned diagonal matrix $W$, and then added to a learned bias vector $b$. Thus, the actual operation implemented is $\sqrt{d} W\operatorname{LayerNorm}(x)+b$, where $W$ is the learned diagonal matrix and $b$ is the learned vector. 

This is important with regard to our earlier discussion of the extent to which LayerNorm affects feature vector directions, because although $b$ does not affect the gradient, nevertheless, if the values of $W$ are different from one another, this could cause the gradient to point in a different direction from the original feature vector.

However, empirically, we find that most of the entries in $W$ are very close to one another. This suggests that we can approximate $W$ as a scalar, meaning that $W$ primarily scales the gradient, rather than changing its direction. Therefore, if we want to analyze the directions of feature vectors rather than their magnitudes, then \textit{we can largely do so without worrying about LayerNorms}.

In particular, we found that the average variance of scaling matrix entries across all LayerNorms in GPT-Neo-1.3B is $0.007827$. To determine the extent to which this variance is large, we calculated the ratio of the variance of each LayerNorm's weight matrix's entries to the mean absolute value of each layer's embeddings' entries. The results can be found in Figure \ref{fig:layerNormVariances}. Note that the highest value found was $0.0714$ at Layer 0 -- meaning that the average entry in Layer 0 embeddings was over 14.01 times larger than the variance between entries in that layer's \texttt{ln\_1} LayerNorm weight. This supports our assertion that LayerNorm scaling matrices can be largely treated as constants.

\begin{figure}
    \centering
    \includegraphics[width=0.6\textwidth]{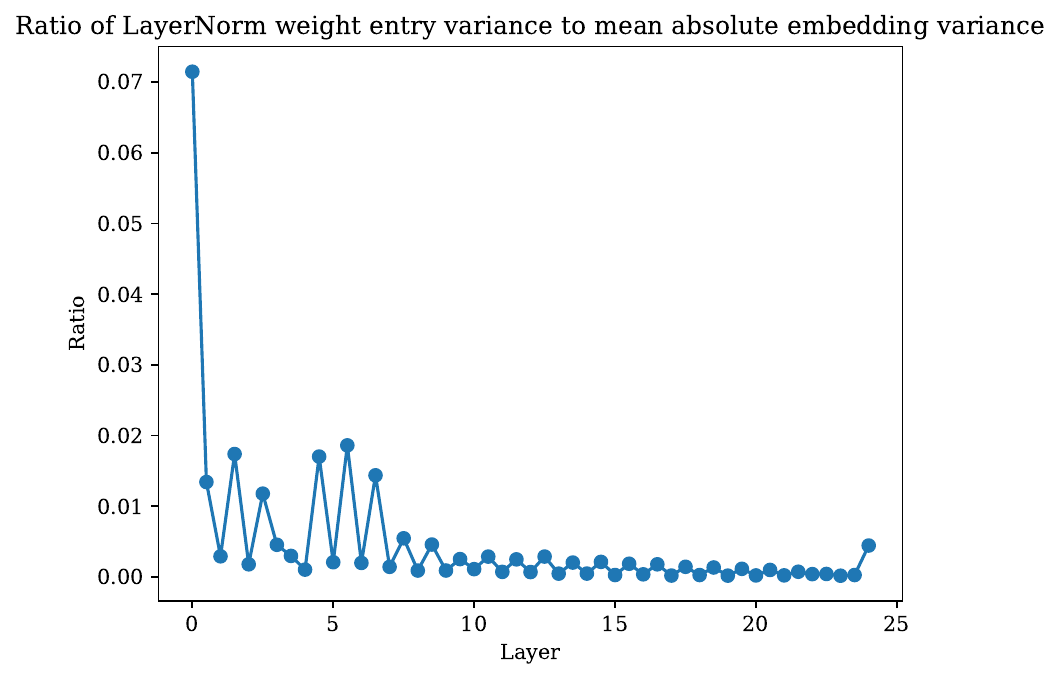}
    \caption{Ratio between LayerNorm weight matrix variances and mean absolute entries of each layer's embeddings}
    \label{fig:layerNormVariances}
\end{figure}

One possible guess as to why this behavior might be occurring is this: much of the computation taking place in the model does not occur with respect to basis directions in activation space. However, the diagonal LayerNorm weight matrices can only act on these very basis directions. Therefore, the weight matrices end up \enquote{settling} on the same nearly-constant value in all entries.

\subsection{LayerNorm Gradients Are Inversely Proportional to Input Norms}
\label{sec:layerNormGradInvProp}
In \S \ref{sec:LayerNorm}, it was stated that LayerNorm gradients are not constant, but instead, depend on the norm of the input to the LayerNorm. To elaborate, the gradient of $n^T (\sqrt{d}W \LN(x) + b)$ can be shown to be $\frac{\sqrt{d}W}{\norm{Px}} P \left( I - \frac{(Px)(Px)^T}{\norm{Px}^2} \right) n$ (see Appendix \ref{sec:LayerNormProof}). $P$ and  $\left( I - \frac{(Px)(Px)^T}{\norm{Px}^2} \right)$ are both orthogonal projections that leave $\norm{n}$ relatively untouched, so the term that is most responsible for affecting the norm of the feature vector is the $\frac{\sqrt{d}W}{\norm{Px}}$ factor. Now, by Lemma \ref{lemma:projSim} in Appendix \ref{sec:LayerNormProof}, we have that $\frac{\sqrt{d}W}{\norm{Px}} \approx \frac{\sqrt{d}W}{\norm{x}}$. Thus, if $\widetilde{\norm{x}}$ a good estimate of $\norm{x}$ for a given set of input prompts at a given layer, then a good approximation of the gradient of a LayerNorm sublayer is given by $\left(\sqrt{d}W/\widetilde{\norm{x}}\right) n$. This approximation can be used to speed up the computation of gradients for LayerNorms.

\subsection{Feature Vector Norms with LayerNorms}
\label{sec:normsWithLayerNorms}
In \S \ref{sec:featureVectorNorms}, we explained that looking at the norms of feature vectors can provide a fast and reasonable guess of which model components will be the most important for a given task. However, there is a caveat that must be taken into account regarding LayerNorms. As shown in Appendix \ref{sec:layerNormGradInvProp}, the gradient of a LayerNorm sublayer is approximately inversely proportional to the norm of the input to the LayerNorm sublayer.

Now, assume that we have a computational path beginning at a LayerNorm, where $\widetilde{\norm{x}}$ is an estimate of the norm of the inputs to that LayerNorm. Let $y$ be the feature vector for this computational path. Then we have $y \approx \sqrt{d}W/\widetilde{\norm{x}} y'$, where $y'$ is the feature vector for the \enquote{tail} of the computational path, that comes after the initial LayerNorm.

Given an input $x$, we have that 

\begin{align*}
    y \cdot x &\approx \sqrt{d}W/\widetilde{\norm{x}} y' \cdot x \\
    &= \sqrt{d}\widetilde{\norm{x}} \norm{Wy'} \norm{x} \cos \theta \\
    &\approx \sqrt{d} \norm{Wy'} \cos \theta
\end{align*}

Therefore, the dot product of an input vector with the feature vector $y$ will be approximately proportional to $\sqrt{d} \norm{Wy'}$ -- not $\sqrt{d} \norm{Wy'}/\widetilde{\norm{x}}$. As such, if one wants to use feature vector norms to predict which feature vectors will have the highest dot products with their inputs, then \textit{that feature vector must not be multiplied by $1/\widetilde{\norm{x}}$}.

A convenient consequence of this is that when analyzing computational paths that do not involve any compositionality (e.g. analyzing a single attention head or a single MLP) -- then ignoring LayerNorms entirely still provides an accurate idea of the relative importance of attention heads. This is because the only time that a $(\sqrt{d}W/\widetilde{\norm{x}})$ term appears with the factor of $1/\widetilde{\norm{x}}$ included is for the final LayerNorm before the logits output. As such, since this factor is not dependent on the layer of the component being analyzed, it can be ignored.

\subsection{Proof of Theorem \ref{thm:LayerNorm}}
\label{sec:LayerNormProof}
\newtheorem*{thm:LayerNorm}{Theorem \ref{thm:LayerNorm}}
\begin{thm:LayerNorm}
    Define $f(x; n) = n \cdot \LN(x)$. Define $$\theta(x;n) = \arccos\left(\frac{n \cdot \nabla_x f(x; n)}{\norm{n}\norm{\nabla_x f(x; n)}}\right)$$

    -- that is, $\theta(x;n)$ is the angle between $n$ and $\nabla_x f(x; n)$. Then if $n \sim \mathcal{N}(0,I)$ in $\mathbb{R}^d$, and $d\ge8$ then
    
    \[\mathbb{E}\left[ \theta(x;n) \right] < 2 \arccos\left(\sqrt{1-\frac{1}{d-1}}\right) \]
\end{thm:LayerNorm}

To prove this, we will introduce a lemma:

\begin{lemma}
    \label{lemma:projSim}
    Let $y$ be an arbitrary vector. Let $A = I-\frac{vv^T}{\norm{v}^2}$ be the orthogonal projection onto the hyperplane normal to $v$. Then the cosine similarity between $y$ and $Ay$ is given by $\sqrt{1-\cos(\theta)^2}$, where $\cos(\theta)$ is the cosine similarity between $y$ and $v$.
\end{lemma}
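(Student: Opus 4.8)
\textbf{Proof plan for Lemma \ref{lemma:projSim}.} The plan is to reduce the cosine similarity between $y$ and $Ay$ to the ratio $\norm{Ay}/\norm{y}$, exploiting that $A$ is a symmetric idempotent, and then to expand that ratio using the definition of $\cos(\theta)$. First I would write out $Ay = y - \frac{v^T y}{\norm{v}^2} v$ explicitly. Since $A = A^T$ and $A^2 = A$ (it is an orthogonal projection), we have $y \cdot Ay = y^T A y = y^T A^T A y = (Ay)\cdot(Ay) = \norm{Ay}^2$. Hence the cosine similarity between $y$ and $Ay$ is
\[
\frac{y \cdot Ay}{\norm{y}\,\norm{Ay}} = \frac{\norm{Ay}^2}{\norm{y}\,\norm{Ay}} = \frac{\norm{Ay}}{\norm{y}}.
\]

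Next I would compute $\norm{Ay}^2$ directly: expanding $\norm{Ay}^2 = \norm{y}^2 - 2\frac{(v^T y)^2}{\norm{v}^2} + \frac{(v^T y)^2}{\norm{v}^4}\norm{v}^2 = \norm{y}^2 - \frac{(v^T y)^2}{\norm{v}^2}$. Dividing by $\norm{y}^2$ gives
\[
\frac{\norm{Ay}^2}{\norm{y}^2} = 1 - \frac{(v^T y)^2}{\norm{v}^2\norm{y}^2} = 1 - \cos(\theta)^2,
\]
where $\cos(\theta) = \frac{v^T y}{\norm{v}\norm{y}}$ is the cosine similarity between $y$ and $v$. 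Taking square roots yields that the cosine similarity between $y$ and $Ay$ equals $\sqrt{1-\cos(\theta)^2}$, as claimed.

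There is essentially no hard step here; it is a routine computation. The only points requiring a word of care are the degenerate cases: if $y = 0$ or $v = 0$ the statement is vacuous (cosine similarity is undefined on the zero vector), and if $y$ is parallel to $v$ then $Ay = 0$, so the cosine similarity is again undefined — but this is the limiting case $\cos(\theta) = \pm 1$, for which the right-hand side $\sqrt{1-\cos(\theta)^2} = 0$ correctly reflects that $Ay$ has vanished. I would simply note that the identity holds whenever $y \notin \operatorname{span}(v)$ and $v \neq 0$, which is all that is needed for its application to LayerNorm gradients.
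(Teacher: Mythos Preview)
Your proof is correct and follows essentially the same approach as the paper: both reduce the cosine similarity to $\norm{Ay}/\norm{y}$ via the identity $y\cdot Ay=\norm{Ay}^2$ and then expand $\norm{Ay}^2$ directly. Your use of $A=A^T$ and $A^2=A$ to obtain this identity in one line is slightly cleaner than the paper's explicit expansion of both $y\cdot Ay$ and $\norm{Ay}^2$ separately (after normalizing $y$), but the underlying computation is the same.
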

\begin{proof}
Assume without loss of generality that $y$ is a unit vector. (Otherwise, we could rescale it without affecting the angle between $y$ and $v$, or the angle between $y$ and $Ay$.)

We have $Ay = y - \frac{y \cdot v}{\Vert v \Vert^2} v$. Then,

\begin{align*}
y \cdot Ay &= y \cdot (y - \frac{y \cdot v}{\Vert v \Vert^2} v) \\
&= \Vert y \Vert^2 - \frac{(y \cdot v)^2}{\Vert v \Vert^2} \\
&= 1 - \frac{(y \cdot v)^2}{\Vert v \Vert^2}
\end{align*}

and 

\begin{align*}
\Vert Ay \Vert^2 &= (y - \frac{y \cdot v}{\Vert v \Vert^2} v) \cdot (y - \frac{y \cdot v}{\Vert v \Vert^2} v) \\
&= y \cdot (y - \frac{y \cdot v}{\Vert v \Vert^2} v) - \frac{y \cdot v}{\Vert v \Vert^2} v \cdot (y - \frac{y \cdot v}{\Vert v \Vert^2} v)  \\
&= y \cdot Ay - \frac{y \cdot v}{\Vert v \Vert^2} v \cdot (y - \frac{y \cdot v}{\Vert v \Vert^2} v) \\
&= y \cdot Ay - \frac{(y \cdot v)^2}{\Vert v \Vert^2} + \left \Vert \frac{y \cdot v}{\Vert v \Vert^2} v \right \Vert^2 \\
&= y \cdot Ay - \frac{(y \cdot v)^2}{\Vert v \Vert^2} + \frac{(y \cdot v)^2}{\Vert v \Vert^4} \Vert  v \Vert^2 \\
&= y \cdot Ay - \frac{(y \cdot v)^2}{\Vert v \Vert^2} + \frac{(y \cdot v)^2}{\Vert v \Vert^2} \\
&= y \cdot Ay
\end{align*}

Now, the cosine similarity between $y$ and $Ay$ is given by
\begin{align*}
\frac{y \cdot Ay}{\Vert y \Vert \Vert Ay \Vert} &= \frac{y \cdot Ay}{ \Vert Ay \Vert } \\
&= \frac{\Vert Ay \Vert^2}{ \Vert Ay \Vert }\\
&= \Vert Ay \Vert
\end{align*}

At this point, note that $\Vert Ay \Vert = \sqrt{y \cdot Ay} = \sqrt{1 - \frac{(y \cdot v)^2}{\Vert v \Vert^2}}$. But $\frac{y \cdot v}{\Vert v \Vert}$ is just the cosine similarity between $y$ and $v$. Now, if we denote the angle between $y$ and $v$ by $\theta$, we thus have

$$ \Vert Ay \Vert = \sqrt{1 - \frac{(y \cdot v)^2}{\Vert v \Vert^2}} = \sqrt{1 - \cos(\theta)^2}. $$
\end{proof}

Now, we are ready to prove Theorem \ref{thm:LayerNorm}.

\begin{proof}
First, as noted by \citet{brody2023expressivity}, we have that $\LN(x) = \frac{Px}{\norm{Px}}$, where $P = I-\frac{1}{d}\vec{1}\vec{1}^T$ is the orthogonal projection onto the hyperplane normal to $\Vec{1}$, the vector of all ones. Thus, we have
$$ f(x; n) = n^T \left(\frac{Px}{\Vert Px \Vert}\right) $$

Using the multivariate chain rule along with the rule that the derivative of $\frac{x}{\Vert x \Vert}$ is given by $\frac{I}{\Vert x \Vert} - \frac{xx^T}{\Vert x \Vert^3}$ (see \S2.6.1 of \cite{petersen2012matrix}), we thus have that

\begin{align*}
\nabla_x f(x; n) &= \left( n^T \left (\frac{I}{\Vert Px \Vert} - \frac{(Px)(Px)^T}{\Vert Px \Vert^3}\right) P \right)^T \\
&= \left( \frac{1}{\Vert Px \Vert} n^T \left (I - \frac{(Px)(Px)^T}{\Vert Px \Vert^2}\right) P \right)^T \\
&= \frac{1}{\Vert Px \Vert} P \left(I - \frac{(Px)(Px)^T}{\Vert Px \Vert^2}\right) n & \text{because $P$ is symmetric}
\end{align*}

Denote $Q = I - \frac{(Px)(Px)^T}{\Vert Px \Vert^2}$. Note that this is an orthogonal projection onto the hyperplane normal to $Px$. We now have that $\nabla_x f(x; n) = \frac{1}{\Vert Px \Vert} PQn$. Because we only care about the angle between $n$ and $\nabla_x f(x; n)$, it suffices to look at the angle between $n$ and $PQn$, ignoring the $\frac{1}{\Vert Px \Vert}$ term.

Denote the angle between $n$ and $PQn$ as $\theta(x,n)$. (Note that $\theta$ is also a function of $x$ because $Q$ is a function of $x$.) Then if $\theta_Q(x,n)$ is the angle between $n$ and $Qn$, and $\theta_P(x,n)$ is the angle between $Qn$ and $PQn$, then $\theta(x,n) \le \theta_Q(x,n) + \theta_P(x,n)$, so $\mathbb{E}[\theta(x,n)] \le \mathbb{E}[\theta_Q(x,n)] + \mathbb{E}[\theta_P(x,n)]$.

Using Lemma \ref{lemma:projSim}, we have that $\theta_Q(x,n) = \arccos\left(\sqrt{1 - \cos(\phi(n, Px))^2}\right)$, where $\phi(n, Px)$ is the angle between $n$ and $Px$. Now, because $n \sim \mathcal{N}(0,I)$, we have $\mathbb{E}[\cos(\phi(n, Px))^2] = 1/d$, using the well-known fact that the expected squared dot product between a uniformly distributed unit vector in $\mathbb{R}^d$ and a given unit vector in $\mathbb{R}^d$ is $1/d$.

At this point, define $g(t) = \arccos\left(\sqrt{1 - t}\right)$, $h(t)=g'\left(\frac{1}{d-1}\right)\left(t-\frac{1}{d-1}\right)+g\left(\frac{1}{d-1}\right)$. Then if $\frac{1}{d-1} < c$, where $c$ is the least solution to $g'(c) = \frac{\pi-2g(c)}{2(1-c)}$, then $h(t) \ge g(t)$. (Note that $g(t)$ is convex on $(0,0.5]$ and concave on $[0.5,1)$. Therefore, there are exactly two solutions to $g'(c) = \frac{\pi-2g(c)}{2(1-c)}$. The lesser of the two solutions is the value at which $g'(c)$ equals the slope of the line between $(c,g(c))$ and $(1, \pi/2)$ -- the latter point being the maximum of $g$ -- at the same time that $g''(c) \ge 0$.) One can compute $c \approx 0.155241\dots$, so if $d\ge 8$, then $1/(d-1) < c$ is satisfied, so $h(t) \ge g(t)$. Thus, we have the following inequality:

\begin{align*}
    h(1/(d-1)) &> h(1/d) \\
    &= h(\mathbb{E}[\cos(\phi(n, Px))^2]) \\
    &= \mathbb{E}[h(\cos(\phi(n, Px))^2)] \text{ due to linearity} \\
    &\ge \mathbb{E}[g(\cos(\phi(n, Px))^2)] \text{ because $h(t) \ge g(t)$ for all $t$} \\
    &= \mathbb{E}[\theta_Q(x,n)]
\end{align*}

Now, $h(1/(d-1)) = g(1/(d-1)) = \arccos\left(\sqrt{1-\frac{1}{d-1}}\right)$. Thus, we have that $\arccos\left(\sqrt{1-\frac{1}{d-1}}\right) > \mathbb{E}[\theta_Q(x,n)]$.

The next step is to determine an upper bound for $\mathbb{E}[\theta_P(x,n)]$. By Lemma 1, we have that $\theta_P(x,n) = \arccos\left( \sqrt{1-\cos(\phi(Qn, \vec{1}))^2}\right)$. Now, note that because $n \sim \mathcal{N}(0,I)$, then $Qn$ is distributed according to a unit Gaussian in $\operatorname{Im} Q$, the ($d-1$)-dimensional hyperplane orthogonal to $Px$. Note that because $\vec{1}$ is orthogonal to $Px$ (by the definition of $P$) and $Px$ is orthogonal to $\operatorname{Im} Q$, this means that $\vec{1} \in \operatorname{Im} Q$. Now, let us apply the same fact from earlier: that the expected squared dot product between a uniformly distributed unit vector in $\mathbb{R}^{d-1}$ and a given unit vector in $\mathbb{R}^{d-1}$ is $1/(d-1)$. Thus, we have that $\mathbb{E}[\cos(\phi(Qn, \vec{1}))^2] = 1/(d-1)$.

From this, by the same logic as in the previous case, $\arccos\left(\sqrt{1-\frac{1}{d-1}}\right) \ge \mathbb{E}[\theta_P(x,n)]$.

Adding this inequality to the inequality for $\mathbb{E}[\theta_Q(x,n)]$, we have \[2 \arccos\left(\sqrt{1-\frac{1}{d-1}}\right) > \mathbb{E}[\theta_Q(x,n)] + \mathbb{E}[\theta_P(x,n)] \ge \mathbb{E}[\theta(x,n)]\].
\end{proof}

\subsection{Empirical Results Regarding LayerNorm Gradients}
\label{sec:empiricalThm1}

In Section \ref{sec:LayerNorm}, we mention that we empirically found that feature vectors computed by differentiating through LayerNorms had high cosine similarities with feature vectors computed while ignoring LayerNorms. 

In particular, for the feature vectors considered in Section \ref{sec:quantitative}, these cosine similarities and angles are given in Table \ref{tab:layerNormCossims}.

\begin{table}[]
    \centering
    \begin{tabular}{@{}ccc@{}}
    \toprule
       Task & Cosine similarity & Angle (radians) \\ \midrule 
        Subject pronoun prediction (attention 6::6) & $0.99779$ & $0.0664$ \\  
        C vs. Python & $0.99936$ & $0.0358$ \\ 
        Political party prediction & $0.99900$ & $0.0447$ \\ 
        \bottomrule
    \end{tabular}
    \caption{Cosine similarities between the feature vectors used in Section \ref{sec:quantitative}, computed with and without LayerNorms}
    \label{tab:layerNormCossims}
\end{table}

Theorem \ref{thm:LayerNorm} can be used to estimate the expected angle in radians between a feature vector computed with LayerNorm and a feature vector computed without LayerNorm. In this case, given that the model has dimensionality 512, this expected angle is approximately 0.0442 radians. 

In general, this value is a decent estimation to the empirical values that we found -- especially when considering that Theorem \ref{thm:LayerNorm} makes the assumption that feature vectors are normally distributed, and when considering that this theorem does not take into account the scaling matrix after the LayerNorm described in Appendix \ref{sec:empiricalLayerNorm}.

Additionally, while the feature vectors for subject pronoun prediction have a higher angle between them of 0.0664 radians, this can be possibly be attributed to the fact that the circuit for these feature vectors goes through multiple LayerNorms.

\section{Applicability to Modified Transformer Architectures}
Many modern transformers include architectural modifications from older transformer models such as GPT2 \citep{radford2019language}. For instance, the open-source Llama 3 family of models \citep{llama3modelcard} modifies attention sublayers by using grouped-query attention (GQA) \citep{ainslie2023gqa}, and uses RMSNorm \citep{zhang2019root} instead of LayerNorm. It is thus natural to consider the extent to which \ourmethod generalizes to models incorporating these architectural changes. Happily, \ourmethod functions the exact same way in the presence of GQA and RMSNorm. Because GQA only affects the QK circuit of attention, and \ourmethod only addresses the OV circuit of attention, GQA does not change the operation of our method.

As for RMSNorm, from a practical perspective, Algorithm \ref{alg:observablePropagation} still works the same (because RMSNorm can just be treated as another nonlinearity). And from a theoretical perspective, RMSNorm is actually easier to handle than vanilla LayerNorm: for Theorem \ref{thm:LayerNorm}, the bound becomes tighter ($\arccos(\sqrt{1-1/d})$), the normality assumption can be changed to apply to $x$ instead of $n$, and the proof follows almost immediately from Lemma \ref{lemma:projSim}. (This is because RMSNorm does not include the $P$ projection found in the proof of Theorem \ref{thm:LayerNorm}.)

\section{Further Debiasing Experiments}
\label{sec:furtherDebiasing}
We ran further experiments on the artificial dataset described in Appendix \ref{sec:datasets}, in order to determine the extent to which the feature vectors yielded by observable propagation could be used for debiasing the model's outputs. The idea is similar to that presented by \citet{li2023inferencetime}: by adding a feature vector to the activations at a given layer for the name token, we can hopefully shift the model's output to be less biased.

Specifically, we used the following methodology. We paired each of the 300 female-name prompts for $\nbias$ with one of the 300 male-name prompts for $\nbias$. For each prompt pair, we ran the model on the female-name prompt and on the male-name prompt, recording the scores with respect to the $\nbias$ observable. We then ran the model on the male-name prompt -- but added a multiple of the 6::6 feature vector for $\nbias$ described in \S \ref{sec:genderBias} to the model's activations for the name token before the LayerNorm preceding the layer 6 attention sublayer.

In particular, let $y$ be the unit 6::6 feature vector for $\nbias$, let $x_{\text{female}}$ be the activation vector for the name token at that layer for the female prompt, and let $x_{\text{male}}$ be the activation vector for the name token at that layer for the male prompt. Then we added the vector $y' = \left((x_{\text{female}} - x_{\text{male}}) \cdot y\right)y$ to $x_{\text{male}}$. If the model were a linear model whose output was solely determined by the dot product of the input at this layer with the feature vector $y$, then the output of the model in the case where $y'$ is added to the male embeddings would be the same as the output of the model on the female prompt. Therefore, the difference between this \enquote{patched} output and the model's output on the female prompt can be viewed as an indicator of the extent to which the feature vector is affected by nonlinearity in the model. We also ran this same experiment, but adding $2y'$ instead of $y'$ to the male embeddings, in order to get a stronger debiasing effect.

The results are given in Table \ref{tab:debiasingResultsOccu}. We see that adding $y'$ to the activations for the male prompts is in fact able to cause the model's output to become closer to that of the female prompts -- although not as much as it would if the model were linear. But adding $2y'$ to the male prompts' activations is able to bring the model's output to within an average of $1.3180$ logits of the model's output on the female prompts. And when the mean difference between the patched male prompt outputs and the female prompt outputs is calculated without taking the absolute value, this difference becomes even smaller -- only $0.1316$ logits on average -- which indicates that sometimes, adding $2y'$ to the male prompts' activations even overshoots the model's behavior on the female prompts. As such, we can infer that this feature vector obtained via observable propagation has utility in debiasing the model.

\begin{table}[]
    \centering
    \begin{tabular}{@{}p{0.2\linewidth}cccc@{}}
    \toprule
        & Male prompt & Female prompt & Male patched $+y'$ & Male patched $+2y'$ \\ \midrule 
        Mean $\nbias$ score & $-2.947$ & $5.011$ & $-0.5489$ & $4.879$ \\  
        Mean absolute difference with female scores & $7.9903$ & $0$ & $5.6245$ & $1.3180$ \\ 
        Mean difference from female scores & $7.9583$ & $0$ & $5.5599$ & $0.1316$ \\ 
        \bottomrule
    \end{tabular}
    \caption{The results of the debiasing experiments for the $\nbias$ observable. \enquote{Mean absolute difference with female scores} refers to the mean absolute difference between the $\nbias$ score for each male prompt (or male prompt with patched activations) and the score for the corresponding female prompt. \enquote{Mean difference from female scores} refers to the mean difference, without taking the absolute value, between the $\nbias$ for the female prompt, and the score for the corresponding (patched) male prompt.}
    \label{tab:debiasingResultsOccu}
\end{table}

We then wanted to investigate the extent to which adding this \enquote{debiasing vector} would harm the model's performance on the pronoun prediction task. As such, we repeated these experiments on the dataset of prompts for $\nsubj$, but adding $2y'$ to the male activations. The results can be found in Table \ref{tab:debiasingResultsSubj}. The results show that adding the \enquote{debiasing vector} to the male name embeddings also causes the model's ability to correctly predict gendered pronouns to drop dramatically. This suggests that in cases such as this one, where the model uses the same features for undesirable outputs as it does for desirable outputs, inference-time interventions such as that presented by \citet{li2023inferencetime} may cause an inevitable decrease in model quality.

\begin{table}[]
    \centering
    \begin{tabular}{@{}p{0.4\linewidth}cccc@{}}
    \toprule
        & Male prompt & Female prompt & Male patched $+2y'$ \\ \midrule 
        Mean $\nsubj$ score & $-5.1393$ & $5.0404$ & $4.8794$ \\  
        Mean absolute difference with female scores & $10.180$ & $0$ & $2.148$ \\ 
        Mean difference from female scores & $10.180$ & $0$ & $0.161$ \\ 
        \bottomrule
    \end{tabular}
    \caption{The results of the debiasing experiments for the $\nsubj$ observable, adding $2y'$ to the male prompts' activations.}
    \label{tab:debiasingResultsSubj}
\end{table}

\section{Proof of Theorem \ref{thm:couplingCoeff}}
\label{sec:couplingCoeffProof}
\newtheorem*{thm:couplingCoeff}{Theorem \ref{thm:couplingCoeff}}
\begin{thm:couplingCoeff}
    Let $y_1, y_2 \in \mathbb{R}^d$. Let $x$ be uniformly distributed on the hypersphere defined by the constraints $\Vert x \Vert = s$ and $x \cdot y_1 = k$. Then we have \[\mathbb{E}[x \cdot y_2] = k \frac{y_1 \cdot y_2}{\Vert y_1 \Vert^2}\] and the maximum and minimum values of $x \cdot y_2$ are given by

\[ \frac{\Vert y_2 \Vert}{\Vert y_1 \Vert} \left(k\cos(\theta) \pm \sin(\theta) \sqrt{s^2 \Vert y_1 \Vert^2-k^2} \right)\]

where $\theta$ is the angle between $y_1$ and $y_2$.
\end{thm:couplingCoeff}

Before proving Theorem \ref{thm:couplingCoeff}, we will prove a quick lemma.

\begin{lemma}
    \label{lemma:sphereCenter}
    Let $\mathcal{S}$ be a hypersphere with radius $r$ and center $c$. Then for a given vector $y$, the mean squared distance from $y$ to the sphere, $\mathbb{E}_{s \in \mathcal{S}}[ \norm{y-c}^2 ]$, is given by  $\norm{y-c}^2 + r^2$.
\end{lemma}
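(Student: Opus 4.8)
\textbf{Proof proposal for Lemma \ref{lemma:sphereCenter}.}

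The plan is to exploit the symmetry of the sphere around its center $c$ by writing any point $s \in \mathcal{S}$ as $s = c + r u$, where $u$ ranges uniformly over the unit sphere. Then $\norm{y - s}^2 = \norm{(y-c) - ru}^2$, and expanding the square gives $\norm{y-c}^2 - 2r\,(y-c)\cdot u + r^2 \norm{u}^2 = \norm{y-c}^2 - 2r\,(y-c)\cdot u + r^2$. The only term depending on $u$ is the linear cross-term.

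Next I would take the expectation over $u$ uniformly distributed on the unit sphere. By symmetry, $\mathbb{E}[u] = 0$ (the distribution of $u$ is invariant under $u \mapsto -u$), so $\mathbb{E}[(y-c)\cdot u] = (y-c)\cdot \mathbb{E}[u] = 0$. Hence the cross-term vanishes in expectation, and we are left with $\mathbb{E}_{s \in \mathcal{S}}[\norm{y-s}^2] = \norm{y-c}^2 + r^2$, which is exactly the claim.

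There is essentially no obstacle here: the entire argument reduces to the vanishing of the first moment of the uniform distribution on a sphere, which follows from the antipodal symmetry. The only minor care needed is to make sure the parametrization $s = c + ru$ is stated correctly (and to note that "uniformly distributed on $\mathcal{S}$" pushes forward to "uniformly distributed on the unit sphere" under $s \mapsto (s-c)/r$, which is an isometry up to scaling and hence measure-preserving after normalization). I expect this lemma is then used in the proof of Theorem \ref{thm:couplingCoeff} by recognizing that the constraint set $\{\norm{x} = s,\ x\cdot y_1 = k\}$ is itself a hypersphere of some radius living in an affine hyperplane, with an explicitly computable center (the projection of the origin onto that hyperplane, scaled appropriately), so that $\mathbb{E}[x\cdot y_2]$ and the extreme values of $x \cdot y_2$ can be read off from the geometry of that lower-dimensional sphere.
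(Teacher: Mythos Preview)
Your proof is correct and takes a genuinely different, more direct route than the paper's. The paper proves the lemma by induction on the dimension of the sphere: the base case is the $0$-sphere $\{-r,r\}$ in $\mathbb{R}$, and the inductive step slices the $(d-1)$-sphere into parallel $(d-2)$-spheres along the axis through $y$, applies the inductive hypothesis to each slice to get $\mathbb{E}_{s'\in\mathcal{S}_x}[\norm{y-s'}^2] = (y_1-x)^2 + r^2 - x^2$, and then takes the outer expectation over the slicing coordinate $x$. Your argument bypasses all of this machinery by writing $s = c + ru$ and using only the antipodal symmetry $\mathbb{E}[u]=0$ to kill the cross term; it is shorter, avoids induction entirely, and works uniformly in every dimension at once. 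The paper's slicing approach is more hands-on but also more fragile (one has to be careful about the marginal density of the slicing coordinate, whereas your symmetry argument sidesteps that issue completely). Your closing remark about how the lemma is used in Theorem~\ref{thm:couplingCoeff}---recognizing the constraint set $\{\norm{x}=s,\ x\cdot y_1=k\}$ as a lower-dimensional sphere in an affine hyperplane with an explicit center---is also exactly how the paper proceeds.
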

\begin{proof}
    Without loss of generality, assume that $\mathcal{S}$ is centered at the origin (so $\norm{y-c}^2 = \norm{y}^2$). Induct on the dimension of the $\mathcal{S}$. As our base case, let $\mathcal{S}$ be the 0-sphere consisting of a point in $\mathbb{R}^1$ at $-r$ and a point at $r$. Then $\mathbb{E}_{s \in \mathcal{S}}[ \abs{y-s}^2 ] = \frac{(y-r)^2 + (y- (-r))^2}{2} = y^2 + r^2$.

    For our inductive step, assume the inductive hypothesis for spheres of dimension $d-2$; we will prove the theorem of spheres of dimension $d-1$ in an ambient space of dimension $d$. Without loss of generality, let $y$ lie on the x-axis, so that we have $y = \begin{bmatrix}y_1 & 0 & 0 & \dots \end{bmatrix}^T$. Next, divide $\mathcal{S}$ into slices along the x-axis. Denote the slice at position $x=x_0$ as $\mathcal{S}_{x_0}$. Then $\mathcal{S}_{x_0}$ is a ($d-2$)-sphere centered at $\begin{bmatrix}x_0 & 0 & 0 & \dots \end{bmatrix}^T$, and has radius $\sqrt{r^2 - x_0^2}$. Now, by the law of total expectation, $\mathbb{E}_{s \in \mathcal{S}}[ \norm{y-s}^2 ] = \mathbb{E}_{-r \le x \le r}\left[ \mathbb{E}_{s' \in \mathcal{S}_x} \left[ \norm{y-s'}^2 \right]\right]$. We then have that

    \begin{align*}
        \mathbb{E}_{s' \in \mathcal{S}_x} \left[ \norm{y-s'}^2 \right] &= \expect{(y_1-x)^2 + s_2^2 + s_3^2 + \cdots} \\
        &= (y_1-x)^2 + \expect{s_2^2 + s_3^2 + \cdots}    
    \end{align*}

    Once again, $\mathcal{S}_x$ is a ($d-2$)-sphere defined by $s_2^2 + s_3^2 + \cdots = r^2 - x^2$. This means that by the inductive hypothesis, we have $\expect{s_2^2 + s_3^2 + \cdots} = r^2 - x^2$. Thus, we have

    \begin{align*}
        \mathbb{E}_{s' \in \mathcal{S}_x} \left[ \norm{y-s'}^2 \right] &= (y_1-x)^2 + r^2 - x^2 \\
        \mathbb{E}_{s' \in \mathcal{S}_x} \left[ \norm{y-s'}^2 \right] &= (y_1-x)^2 + r^2 - x^2 \\
        \mathbb{E}_{s \in \mathcal{S}}[ \norm{y-s}^2 ] &= \mathbb{E}_{-r \le x \le r}\left[ (y_1-x)^2 + r^2 - x^2 \right] \\
        &= \frac{1}{2r} \int_{-r}^r (y_1-x)^2 + r^2 - x^2 dx \\
        &= r^2 + y_1^2
    \end{align*}
\end{proof}

We are now ready to begin the main proof.

\begin{proof}
    First, assume that $\norm{x} = 1$. Now, the intersection of the $(d-1)$-sphere defined by $\norm{x} = 1$ and the hyperplane $x \cdot y_1 = k$ is a unit hypersphere of dimension $(d-2)$, oriented in the hyperplane $x \cdot y_1 = k$, and centered at $c_1 y_1$ where $c_1 = k/\norm{y_1}^2$. Denote this $(d-2)$-sphere as $\mathcal{S}$, and denote its radius by $r$.

    Next, define $c_2 = \frac{k}{y_2 \cdot y_1}$. Then $cy_2 \cdot y_1 = k$, so $c_2y_2$ lies in the same hyperplane as $\mathcal{S}$. Additionally, because $c_1 y_1$ is in this hyperplane, and $c_1 y_1$ is also the normal vector for this hyperplane, we have that the vectors $c_1 y_1$, $c_2 y_2$, and $c_1 y_1 - c_2 y_2$ form a right triangle, where $c_2 y_2$ is the hypotenuse and $c_1 y_1 - c_2 y_2$ is the leg opposite of the angle $\theta$ between $y_1$ and $y_2$. As such, we have that $\norm{c_1y_1 - c_2y_2} = \sin(\theta) \norm{c_2y_2}$.

    Furthermore, we have that $c_1y_1 \cdot c_2y_2 = \frac{k^2}{\norm{y_1}^2}$, that $\norm{c_1 y_1} = \frac{\abs{k}}{\norm{y_1}^2}$, and that $\norm{c_2 y_2} = \frac{\abs{k}}{\norm{y_1} \abs{\cos \theta}}$

    We will now begin to prove that the maximum and minimum values of $y_2 \cdot x$ are given by $\frac{\Vert y_2 \Vert}{\Vert y_1 \Vert} \left(k\cos(\theta) \pm \vert \sin(\theta)\vert \sqrt{s^2 \Vert y_1 \Vert^2-k^2} \right)$.

    To start, note that the nearest point on $\mathcal{S}$ to $c_2y_2$ and the farthest point on $\mathcal{S}$ from $c_2y_2$ are located at the intersection of $\mathcal{S}$ with the line between $c_2y_2$ and $c_1 y_1$.

    To see this, let $x_+$ be the at the intersection of $\mathcal{S}$ and the line between $c_2y_2$ and $c_1 y_1$. We will show that $x_+$ is the nearest point on $\mathcal{S}$ to $c_2y_2$. Let $x'_+ \in \mathcal{S} \neq x_+$. Then we have the following cases:
    \begin{itemize}
    \item Case 1: $c_2y_2$ is outside of $\mathcal{S}$. Then $\norm{c_2y_2 - c_1 y_1} = \norm{c_2y_2 - x_+} + \norm{x_+ - c_1 y_1}$, because $c_2y_2$, $x_+$, and $c_1 y_1$ are collinear -- so $\norm{c_2y_2- c_1 y_1} = \norm{c_2y_2 - x_+} + r$ (because $x_+ \in \mathcal{S}$). By the triangle inequality, we have $\norm{c_2y_2 - c_1 y_1} \le \norm{c_2y_2 - x_+'} + \norm{x_+' - c_1 y_1} = \norm{c_2y_2 - x_+'} + r$. But this means that $\norm{c_2y_2 - x_+} \le \norm{c_2y_2 - x_+'}$.
    \item Case 2: $c_2y_2$ is inside of $\mathcal{S}$. Then $\norm{c_2y_2- c_1 y_1} = \norm{x_+ - c_1 y_1} - \norm{c_2y_2 - x_+}$, because $c_2y_2$, $x_+$, and $c_1y_1$ are collinear -- so $\norm{c_2y_2 - c_1 y_1} = r - \norm{c_2y_2 - x_+}$. By the triangle inequality, we have $\norm{x_+'- c_1 y_1} \le \norm{c_2y_2 - x_+'} + \norm{c_2y_2- c_1 y_1}$, so $\norm{x_+'- c_1 y_1} \le \norm{c_2y_2 - x_+'} + r - \norm{c_2y_2 - x_+}$. But since $\norm{x_+'- c_1 y_1} = r$, this means that $\norm{c_2y_2 - x_+} \le \norm{c_2y_2 - x_+'}$.
    \end{itemize}
    
    A similar argument will show that $x_-$, the farthest point on $\mathcal{S}$ from $c_2y_2$, is also located at the intersection of $\mathcal{S}$ with the line between $c_2y_2$ and $c_1 y_1$.

    Now, let us find the values of $x_+$ and $x_-$. The line between $c_2y_2$ and $c_1 y_1$ can be parameterized by a scalar $t$ as $c_1y_1 + t(c_2y_2 - c_1y_1)$. Then $x_+$ and $x_-$ are given by $c_1y_1 + t^*(c_2y_2 - c_1y_1)$, where $t^*$ are the solutions to the equation $\norm{c_1y_1 + t(c_2y_2 - c_1y_1)} = 1$.

    We have the following:

    \begin{align*}
        1 &= \norm{c_1y_1 + t(c_2y_2 - c_1y_1)} \\
        &= \norm{c_1 y_1}^2 + 2t(c_1y_1 \cdot (c_2y_2 - c_1y_1)) + t^2 \norm{c_2y_2-c_1y_1}^2 \\
        &= \norm{c_1 y_1}^2 + 2t( (c_1y_1 \cdot c_2y_2) - \norm{c_1y_1}^2) + t^2 \norm{c_2y_2}^2 \sin^2 \theta \\
        &= \frac{k^2}{\norm{y_1}^2} + 2t\left( \frac{k^2}{\norm{y_1}^2} - \frac{k^2}{\norm{y_1}^2}\right) + t^2 \frac{k^2}{\norm{y_1}^2 \cos^2 \theta} \sin^2 \theta \\
        &= \frac{k^2}{\norm{y_1}^2} (t^2 \tan^2 \theta + 1)
    \end{align*}

    Thus, solving for $t$, we have that $t^* = \frac{\pm \sqrt{\norm{y_1}^2 - k^2}}{\abs{k} \tan \theta}$. Therefore, we have that

    \begin{align*}
        x_+, x_- &= c_1y_1 + t^* (c_2y_2 - c_1y_1) \\
        &= c_1 y_1 + \left( \frac{k^2}{\norm{y_1}^2} (t^2 \tan^2 \theta + 1) \right) (c_2y_2 - c_1y_1) \\
        &= \frac{k y_1}{\norm{y_1}^2} + \left( \frac{\pm \sqrt{\norm{y_1}^2 - k^2}}{\abs{k} \tan \theta} \right) \left(\frac{k y_2}{y_1 \cdot y_2} - \frac{k y_1}{\norm{y_1}^2}\right) \\
        &= k \left[ \frac{ y_1}{\norm{y_1}^2} \pm \left( \frac{\sqrt{\norm{y_1}^2 - k^2}}{\abs{k} \tan \theta} \right) \left(\frac{y_2}{y_1 \cdot y_2} - \frac{y_1}{\norm{y_1}^2}\right) \right]
    \end{align*}
    \begin{align*}
        y_2 \cdot x_+, y_2 \cdot x_- &= y_2 \cdot k \left[ \frac{ y_1}{\norm{y_1}^2} \pm \left( \frac{ \sqrt{\norm{y_1}^2 - k^2}}{\abs{k} \tan \theta} \right) \left(\frac{y_2}{y_1 \cdot y_2} - \frac{y_1}{\norm{y_1}^2}\right) \right] \\
        &= \frac{k y_1 \cdot y_2}{\norm{y_1}^2} \pm \left( \cot \theta \sqrt{\norm{y_1}^2 - k^2} \right) \left(\frac{y_2 \cdot y_2}{y_1 \cdot y_2} - \frac{y_1 \cdot y_1}{\norm{y_1}^2}\right) \\
        &= \frac{k y_1 \cdot y_2}{\norm{y_1}^2} \pm \left( \cot \theta \sqrt{\norm{y_1}^2 - k^2} \right) \left(\frac{\norm{y_2} }{\norm{y_1} \cos \theta} - \frac{\norm{y_2} \cos \theta}{\norm{y_1}}\right) \\
        &= \left[ \frac{k y_1 \cdot y_2}{\norm{y_1}^2} \pm \left( \cot \theta \sqrt{\norm{y_1}^2 - k^2} \right) \frac{\norm{y_2}}{\norm{y_1}}\left(\frac{1}{\cos \theta} - \cos \theta \right) \right] \\
        &= \frac{k y_1 \cdot y_2}{\norm{y_1}^2} \pm \left(  \cot \theta \sqrt{\norm{y_1}^2 - k^2} \right) \frac{\norm{y_2}}{\norm{y_1}} \sin \theta \tan \theta  \\
        &= \frac{k y_1 \cdot y_2}{\norm{y_1}^2} \pm \frac{\norm{y_2}}{\norm{y_1}}  \sin \theta \sqrt{\norm{y_1}^2 - k^2} \\
        &= \frac{\Vert y_2 \Vert}{\Vert y_1 \Vert} \left(k\cos(\theta) \pm \sin(\theta) \sqrt{\Vert y_1 \Vert^2-k^2} \right)
    \end{align*}
    
We will now prove that $\expect{y_2 \cdot x} = \frac{y_1 \cdot y_2}{\norm{y_1}^2}$. Before we do, note that we can also use our value of $t^*$ to determine the squared radius of $\mathcal{S}$. We have that the squared radius of $\mathcal{S}$ is given by 

\begin{align*}
    r^2 &= \norm{t^* (c_2 y_2 - c_1y_1)}^2 \\
    &= (t^*)^2 \norm{(c_2 y_2 - c_1y_1)}^2 \\
    &= (t^*)^2 \sin^2 \theta \norm{c_2y_2}^2 \\
    &= \frac{\sin^2(\theta) k^2 / \left( \norm{y_1}^2 \cos^2 \theta \right)}{k^2 \tan \theta} \left(\norm{y_1}^2 - k^2\right) \\
    &= 1-\frac{k^2}{\norm{y_1}^2}
\end{align*}

We will use this result soon. Now, on to the main event. Begin by noting that $y_2 \cdot x = \norm{y_2} \norm{x} \cos(y_2, x) = \norm{y_2} \cos(y_2, x)$, where $\cos(y_2, x)$ is the cosine of the angle between $y_2$ and $x$. Now, $\cos(y_2, x) = \signum(c_2) \cos(c_2y_2, x)$. And we have that $\norm{x-cy_2}^2 = \norm{x}^2 + \norm{cy_2}^2 - 2 \norm{x}\norm{c_2y_2} \cos(cy_2, x) = 1 + \norm{c_2y_2}^2 - 2 \norm{c_2y_2} \cos(c_2y_2, x)$. Therefore, we have
    \begin{align*}
        \cos(y_2, x) &= \signum(c_2) \cos(c_2y_2, x) \\
        &= \signum(c_2) \frac{\norm{x-c_2y_2}^2 - 1 - \norm{c_2y_2}^2}{-2\norm{c_2y_2}} \\
        &= \signum(c_2) \frac{1 + \norm{c_2y_2}^2 - \norm{x-c_2y_2}^2}{2\norm{c_2y_2}}
    \end{align*}
    \begin{align*}
        y_2 \cdot x &= \norm{y_2} \cos(y_2, x) \\
        &= \signum(c_2)\norm{y_2}\frac{1 + \norm{c_2y_2}^2 - \norm{x-c_2y_2}^2}{2\norm{c_2y_2}}
    \end{align*}
    \begin{align*}
        \expect{y_2 \cdot x} &= \expect{\signum(c_2)\norm{y_2}\frac{1 + \norm{c_2y_2}^2 - \norm{x-c_2y_2}^2}{2\norm{c_2y_2}}} \\
        &= \signum(c_2)\norm{y_2}\frac{1 + \norm{c_2y_2}^2 - \expect{\norm{x-c_2y_2}^2}}{2\norm{c_2y_2}} \\
        &= \signum(c_2)\norm{y_2}\frac{1 + \norm{c_2y_2}^2 - \left(1-\frac{k^2}{\norm{y_1}^2} + \norm{c_1y_1-c_2y_2}^2 \right)}{2\norm{c_2y_2}}
    \end{align*}
    This last line uses Lemma \ref{lemma:sphereCenter}: $c_1y_1$ is the center of $\mathcal{S}$, so the expected squared distance between $c_2 y_2$ and a point on $\mathcal{S}$ is given by $1-\frac{k^2}{\norm{y_1}^2} + \norm{c_1y_1-c_2y_2}^2$, where $1-\frac{k^2}{\norm{y_1}^2}$ is the squared radius of $\mathcal{S}$ and $\norm{c_1y_1-c_2y_2}^2$ is the squared distance from $c_2y_2$ to the center. We can use this lemma because $c_2 y_2$ is in the same hyperplane as $\mathcal{S}$, so we can treat this situation as being set in a space of dimension $d-1$.

    Now, continue to simplify:
    
    \begin{align*}
        \expect{y_2 \cdot x} &= \signum(c_2)\norm{y_2}\frac{1 + \norm{c_2y_2}^2 - \left(1-\frac{k^2}{\norm{y_1}^2} + \norm{c_1y_1-c_2y_2}^2 \right)}{2\norm{c_2y_2}} \\
        &= \signum(c_2)\norm{y_2}\frac{\norm{c_2y_2}^2 + \frac{k^2}{\norm{y_1}^2} - \sin^2 \theta \norm{c_2 y_2}^2}{2\norm{c_2y_2}}\\
        &= \signum(c_2)\norm{y_2}\frac{\norm{c_2y_2}^2 \cos^2 \theta + \frac{k^2}{\norm{y_1}^2}}{2\norm{c_2y_2}}\\
        &= \signum(c_2)\norm{y_2}\frac{1}{2}\left(\norm{c_2y_2} \cos^2 \theta + \frac{\abs{k} \cos \theta}{\norm{y_1}} \right)\\
        &= \signum(c_2)\norm{y_2}\frac{1}{2}\left(\frac{\abs{k} \abs{\cos \theta}}{\norm{y_1}} + \frac{\abs{k} \abs{\cos \theta}}{\norm{y_1}} \right) \\
        &= \signum(c_2)\abs{k} \frac{\norm{y_2}}{\norm{y_1}}\abs{\cos \theta} \\
        &= k \frac{\norm{y_2}}{\norm{y_1}}\cos \theta \\
        &= k \frac{y_1 \cdot y_2}{\norm{y_1}^2}
    \end{align*}

    The last thing to do is to note that the above formulas are only valid when $\norm{x} = 1$. But if $\norm{x} = s$, this is equivalent to the case when $\norm{x}=1$ if we scale $y_1$ and $y_2$ by $s$. Scaling those two vectors by $s$ gives us the final formulas in Theorem \ref{thm:couplingCoeff}.
\end{proof}

\section{Top Activating Tokens on 1M Tokens from The Pile for $\nbias$ and $\nsubj$ 6::6 Feature Vectors}

In \S \ref{sec:genderBias}, in order to confirm that the feature vectors that we found for attention head 6::6 corresponded to notions of gender, we looked at the tokens from a dataset of 1M tokens from The Pile (see Appendix \ref{sec:datasets}) that maximally and minimally activated these feature vectors.

\subsection{$\nbias$ Feature Vector}

The thirty highest-activating tokens, along with the prompts from which they came, and their scores, are given below:
\begin{enumerate}
\item Highest-activating token \#1:
\begin{itemize}
	\item Excerpt from prompt: \texttt{"son Tower** and in front of it a beautiful statue of St Edmund by Dame Elisabeth Frink (1976). The rest of the abbey spreads eastward like a r"}
	\item Token: \texttt{"abeth"}
	\item Score: 18.372
\end{itemize}
\item Highest-activating token \#2:
\begin{itemize}
	\item Excerpt from prompt: \texttt{" a gorgeous hammerbeam roof and a striking sculpture of the crucified Christ by Dame Elisabeth Frink in the north transept.\textbackslash n\textbackslash nThe impressive entrance porch has a"}
	\item Token: \texttt{"abeth"}
	\item Score: 17.388
\end{itemize}
\item Highest-activating token \#3:
\begin{itemize}
	\item Excerpt from prompt: \texttt{" the elaborate Portuguese silver service or the impressive Egyptian service, a divorce present from Napoleon to Josephine"}
	\item Token: \texttt{"ine"}
	\item Score: 16.815
\end{itemize}
\item Highest-activating token \#4:
\begin{itemize}
	\item Excerpt from prompt: \texttt{" rocky beach of **Priest's Cove**, while nearby are the ruins of **St Helen's Oratory**, supposedly one of the first Christian chapels built in West Cornwall"}
	\item Token: \texttt{" Helen"}
	\item Score: 16.309
\end{itemize}
\item Highest-activating token \#5:
\begin{itemize}
	\item Excerpt from prompt: \texttt{", and opened in 1892, this brainchild of his Parisian actress wife, Josephine, was built by French architect Jules Pellechet to display a collection the Bow"}
	\item Token: \texttt{"ine"}
	\item Score: 16.267
\end{itemize}
\item Highest-activating token \#6:
\begin{itemize}
	\item Excerpt from prompt: \texttt{" the film \_Bridget Jones's Diary;\_ a local house was used as Bridget's parents' home.\textbackslash n\textbackslash n1Sights\textbackslash n\textbackslash nBroadway TowerTOWER"}
	\item Token: \texttt{"idget"}
	\item Score: 16.171
\end{itemize}
\item Highest-activating token \#7:
\begin{itemize}
	\item Excerpt from prompt: \texttt{") by his side and a loyal band of followers in support. Arthur went on to slay Rita Gawr, a giant who butchered"}
	\item Token: \texttt{" Rita"}
	\item Score: 16.079
\end{itemize}
\item Highest-activating token \#8:
\begin{itemize}
	\item Excerpt from prompt: \texttt{" for the fact that Sir Robert Walpole's grandson sold the estate's splendid art collection to Catherine the Great of Russia to stave off debts – those paintings formed the foundation of the"}
	\item Token: \texttt{" Catherine"}
	\item Score: 16.039
\end{itemize}
\item Highest-activating token \#9:
\begin{itemize}
	\item Excerpt from prompt: \texttt{" Highlights include the magnificent gold coach of 1762 and the 1910 Glass Coach (Prince William and Catherine Middleton actually used the 1902 State Landau for their wedding in 2011).\textbackslash n\textbackslash n"}
	\item Token: \texttt{" Catherine"}
	\item Score: 15.967
\end{itemize}
\item Highest-activating token \#10:
\begin{itemize}
	\item Excerpt from prompt: \texttt{" by Canaletto, El Greco and Goya as well as 55 paintings by Josephine herself. Among the 15,000 other objets d'art are incredible dresses from"}
	\item Token: \texttt{"ine"}
	\item Score: 15.906
\end{itemize}
\item Highest-activating token \#11:
\begin{itemize}
	\item Excerpt from prompt: \texttt{" looks like something from a children's storybook (a fact not unnoticed by the author Antonia Barber, who set her much-loved fairy-tale \_The Mousehole Cat"}
	\item Token: \texttt{"ia"}
	\item Score: 15.582
\end{itemize}
\item Highest-activating token \#12:
\begin{itemize}
	\item Excerpt from prompt: \texttt{". Precious little now remains save for a few nave walls, the ruined **St Mary's chapel**, and the crossing arches, which may"}
	\item Token: \texttt{" Mary"}
	\item Score: 15.443
\end{itemize}
\item Highest-activating token \#13:
\begin{itemize}
	\item Excerpt from prompt: \texttt{".\textbackslash n\textbackslash nTrain\textbackslash n\textbackslash nThe northern terminus of the Welsh Highland Railway is on St Helen's Rd. Trains run to Porthmadog (£35 return, 2½"}
	\item Token: \texttt{" Helen"}
	\item Score: 15.374
\end{itemize}
\item Highest-activating token \#14:
\begin{itemize}
	\item Excerpt from prompt: \texttt{"2\textbackslash n\textbackslash n\#\#\# KING RICHARD III\textbackslash n\textbackslash nIt's an amazing story. Philippa Langley, a member of the Richard III Society, spent four-and-a"}
	\item Token: \texttt{"a"}
	\item Score: 15.358
\end{itemize}
\item Highest-activating token \#15:
\begin{itemize}
	\item Excerpt from prompt: \texttt{" pit (which can still be seen) from the granary above. In 1566, Mary, Queen of Scots famously visited the wounded tenant of the castle, Lord Bothwell,"}
	\item Token: \texttt{" Mary"}
	\item Score: 15.312
\end{itemize}
\item Highest-activating token \#16:
\begin{itemize}
	\item Excerpt from prompt: \texttt{" Richard III, Henry VIII and Charles I. It is most famous as the home of Catherine Parr (Henry VIII's widow) and her second husband, Thomas Seymour. Princess"}
	\item Token: \texttt{" Catherine"}
	\item Score: 15.275
\end{itemize}
\item Highest-activating token \#17:
\begin{itemize}
	\item Excerpt from prompt: \texttt{" Peninsula\textbackslash n\textbackslash n\#\#\#\# Bodmin Moor\textbackslash n\textbackslash n\#\#\#\# Isles of Scilly\textbackslash n\textbackslash n\#\#\#\# St Mary's\textbackslash n\textbackslash n\#\#\#\# Tresco\textbackslash n\textbackslash n\#\#\#\# Bryher\textbackslash n\textbackslash n\#\#\#\# St Martin"}
	\item Token: \texttt{" Mary"}
	\item Score: 15.246
\end{itemize}
\item Highest-activating token \#18:
\begin{itemize}
	\item Excerpt from prompt: \texttt{"'.\textbackslash n\textbackslash nOutside the cathedral's eastern end is the grave of the WWI heroine Edith"}
	\item Token: \texttt{"ith"}
	\item Score: 15.182
\end{itemize}
\item Highest-activating token \#19:
\begin{itemize}
	\item Excerpt from prompt: \texttt{" many people visit for the region's literary connections; William Wordsworth, Beatrix Potter, Arthur Ransome and John Ruskin all found inspiration here.\textbackslash n\textbackslash n"}
	\item Token: \texttt{"rix"}
	\item Score: 15.135
\end{itemize}
\item Highest-activating token \#20:
\begin{itemize}
	\item Excerpt from prompt: \texttt{" Peninsula\textbackslash n\textbackslash n\#\#\#\# Bodmin Moor\textbackslash n\textbackslash n\#\#\#\# Isles of Scilly\textbackslash n\textbackslash n\#\#\#\# St Mary's\textbackslash n\textbackslash n\#\#\#\# Tresco\textbackslash n\textbackslash n\#\#\#\# Bryher\textbackslash n\textbackslash n\#\#\#\# St Martin"}
	\item Token: \texttt{" Mary"}
	\item Score: 15.111
\end{itemize}
\item Highest-activating token \#21:
\begin{itemize}
	\item Excerpt from prompt: \texttt{" \_Mayor of Casterbridge\_ locations hidden among modern Dorchester. They include **Lucetta's House**, a grand Georgian affair with ornate door posts in Trinity St,"}
	\item Token: \texttt{"etta"}
	\item Score: 15.053
\end{itemize}
\item Highest-activating token \#22:
\begin{itemize}
	\item Excerpt from prompt: \texttt{" leads down to this little cove and the remains of the small Tudor fort of **St Catherine's Castle**.\textbackslash n\textbackslash nPolkerris BeachBEACH\textbackslash n\textbackslash n(  G"}
	\item Token: \texttt{" Catherine"}
	\item Score: 15.051
\end{itemize}
\item Highest-activating token \#23:
\begin{itemize}
	\item Excerpt from prompt: \texttt{"-century **St Catherine's Lighthouse** and its 14th-century counterpart, **St Catherine's Or"}
	\item Token: \texttt{" Catherine"}
	\item Score: 14.979
\end{itemize}
\item Highest-activating token \#24:
\begin{itemize}
	\item Excerpt from prompt: \texttt{" ) ; Castle Yard) stands behind a 15th-century gate near the church of St Mary de Castro (  MAP   GOOGLE MAP ) ; Castle St),"}
	\item Token: \texttt{" Mary"}
	\item Score: 14.968
\end{itemize}
\item Highest-activating token \#25:
\begin{itemize}
	\item Excerpt from prompt: \texttt{" the Glasgow School of Art. It was there that he met the also influential artist and designer Margaret Macdonald, whom he married; they collaborated on many projects and were major influences on"}
	\item Token: \texttt{" Margaret"}
	\item Score: 14.930
\end{itemize}
\item Highest-activating token \#26:
\begin{itemize}
	\item Excerpt from prompt: \texttt{" Nov-Mar )\textbackslash n\textbackslash nThe raising of the 16th-century warship the \_Mary Rose\_ in 1982 was an extraordinary feat of marine archaeology. Now the new £"}
	\item Token: \texttt{"Mary"}
	\item Score: 14.699
\end{itemize}
\item Highest-activating token \#27:
\begin{itemize}
	\item Excerpt from prompt: \texttt{" was claimed by the Boleyn family and passed through the generations to Thomas, father of Anne Boleyn. Anne was executed by her husband Henry VIII in 1533, who"}
	\item Token: \texttt{" Anne"}
	\item Score: 14.686
\end{itemize}
\item Highest-activating token \#28:
\begin{itemize}
	\item Excerpt from prompt: \texttt{". The village has literary cachet too – Wordsworth went to school here, and Beatrix Potter's husband, William Heelis, worked here as a solicitor for"}
	\item Token: \texttt{"rix"}
	\item Score: 14.658
\end{itemize}
\item Highest-activating token \#29:
\begin{itemize}
	\item Excerpt from prompt: \texttt{" are William MacTaggart's Impressionistic Scottish landscapes and a gem by Thomas Millie Dow. There's also a special collection of James McNeill Whistler's lim"}
	\item Token: \texttt{"ie"}
	\item Score: 14.626
\end{itemize}
\item Highest-activating token \#30:
\begin{itemize}
	\item Excerpt from prompt: \texttt{" Stay\textbackslash n\textbackslash nAMillgate House\textbackslash n\textbackslash nADevonshire Fell\textbackslash n\textbackslash nAHelaina\textbackslash n\textbackslash nAQuebecs\textbackslash n\textbackslash nALa Rosa Hotel\textbackslash n\textbackslash n\#\# Yorkshire Highlights"}
	\item Token: \texttt{"aina"}
	\item Score: 14.578
\end{itemize}
\end{enumerate}

The thirty lowest-activating tokens, along with the prompts from which they came, and their scores, are given below:

\begin{enumerate}
 \item Lowest-activating token \#1:
\begin{itemize}
	\item Excerpt from prompt: \texttt{" recounted the sighting of a disturbance in the loch by Mrs Aldie Mackay and her husband: 'There the creature disported itself, rolling and plunging for fully a minute"}
	\item Token: \texttt{" husband"}
	\item Score: -12.129
\end{itemize}
\item Lowest-activating token \#2:
\begin{itemize}
	\item Excerpt from prompt: \texttt{" paid time off work during menstruation\textbackslash n• (often from male workers, who viewed the employment of women as competition) women should not be employed in"}
	\item Token: \texttt{" male"}
	\item Score: -11.344
\end{itemize}
\item Lowest-activating token \#3:
\begin{itemize}
	\item Excerpt from prompt: \texttt{" family was devastated, but things quickly got worse. Emily fell ill with tuberculosis soon after her brother's funeral; she never left the house again, and died on 19 December. Anne"}
	\item Token: \texttt{" brother"}
	\item Score: -11.146
\end{itemize}
\item Lowest-activating token \#4:
\begin{itemize}
	\item Excerpt from prompt: \texttt{" handsome Jacobean town house belonging to Shakespeare's daughter Susanna and her husband, respected doctor John Hall, stands south of the centre. The exhibition offers fascinating insights"}
	\item Token: \texttt{" husband"}
	\item Score: -11.016
\end{itemize}
\item Lowest-activating token \#5:
\begin{itemize}
	\item Excerpt from prompt: \texttt{" hall was home to the 16th-century's second-most powerful woman, Elizabeth, Countess of Shrewsbury – known to all as Bess of Hardwick –"}
	\item Token: \texttt{" Count"}
	\item Score: -10.793
\end{itemize}
\item Lowest-activating token \#6:
\begin{itemize}
	\item Excerpt from prompt: \texttt{" haunted places, with spectres from a phantom funeral to Lady Mary Berkeley seeking her errant husband. Owner Sir Humphrey Wakefield has passionately restored the castle's extravagant medieval stater"}
	\item Token: \texttt{" husband"}
	\item Score: -10.682
\end{itemize}
\item Lowest-activating token \#7:
\begin{itemize}
	\item Excerpt from prompt: \texttt{" Windsor Castle in 1861, Queen Victoria ordered its elaborate redecoration as a tribute to her husband. A major feature of the restoration is the magnificent vaulted roof, whose gold mosaic"}
	\item Token: \texttt{" husband"}
	\item Score: -10.577
\end{itemize}
\item Lowest-activating token \#8:
\begin{itemize}
	\item Excerpt from prompt: \texttt{"Ornate Plas Newydd was home to Lady Eleanor Butler and Miss Sarah Ponsonby, two society ladies who ran away from Ireland to Wales disguised as men, and"}
	\item Token: \texttt{"onson"}
	\item Score: -10.503
\end{itemize}
\item Lowest-activating token \#9:
\begin{itemize}
	\item Excerpt from prompt: \texttt{" with DVD players, with tremendous views across the bay from the largest two. Bridget and Derek really give this place a 'home away from home' ambience, and can arrange"}
	\item Token: \texttt{" Derek"}
	\item Score: -10.483
\end{itemize}
\item Lowest-activating token \#10:
\begin{itemize}
	\item Excerpt from prompt: \texttt{" of adultery, debauchery, crime and edgy romance, and is filled with Chaucer's witty observations about human nature.\textbackslash n\textbackslash nHistory\textbackslash n\textbackslash nCanterbury's past"}
	\item Token: \texttt{"cer"}
	\item Score: -10.296
\end{itemize}
\item Lowest-activating token \#11:
\begin{itemize}
	\item Excerpt from prompt: \texttt{" the city in 1645. Legend has it that the disease-ridden inhabitants of **Mary King's Close** (a lane on the northern side of the Royal Mile on the site"}
	\item Token: \texttt{" King"}
	\item Score: -10.294
\end{itemize}
\item Lowest-activating token \#12:
\begin{itemize}
	\item Excerpt from prompt: \texttt{" manor was founded in 1552 by the formidable Bess of Hardwick and her second husband, William Cavendish, who earned grace and favour by helping Henry VIII dissolve the English"}
	\item Token: \texttt{" husband"}
	\item Score: -10.251
\end{itemize}
\item Lowest-activating token \#13:
\begin{itemize}
	\item Excerpt from prompt: \texttt{" Apartments** is the bedchamber where Mary, Queen of Scots gave birth to her son James VI, who was to unite the crowns of Scotland and England in 1603"}
	\item Token: \texttt{" son"}
	\item Score: -10.148
\end{itemize}
\item Lowest-activating token \#14:
\begin{itemize}
	\item Excerpt from prompt: \texttt{"s at the behest of Queen Victoria, the monarch grieved here for many years after her husband's death. Extravagant rooms include the opulent Royal Apartments and Dur"}
	\item Token: \texttt{" husband"}
	\item Score: -10.112
\end{itemize}
\item Lowest-activating token \#15:
\begin{itemize}
	\item Excerpt from prompt: \texttt{"am-5pm Mar-Oct)\textbackslash n\textbackslash nThis ambitious three-dimensional interpretation of Chaucer's classic tales using jerky animatronics and audioguides is certainly entertaining"}
	\item Token: \texttt{"cer"}
	\item Score: -10.053
\end{itemize}
\item Lowest-activating token \#16:
\begin{itemize}
	\item Excerpt from prompt: \texttt{" his death, in the hard-to-decipher Middle English of the day, Chaucer's \_Tales\_ is an unfinished series of 24 vivid stories told by a party"}
	\item Token: \texttt{"cer"}
	\item Score: -10.050
\end{itemize}
\item Lowest-activating token \#17:
\begin{itemize}
	\item Excerpt from prompt: \texttt{" especially in **Poets' Corner**, where you'll find the resting places of Chaucer, Dickens, Hardy, Tennyson, Dr Johnson and Kipling, as well as"}
	\item Token: \texttt{"cer"}
	\item Score: -10.033
\end{itemize}
\item Lowest-activating token \#18:
\begin{itemize}
	\item Excerpt from prompt: \texttt{"          her? She’s up here saying his intent was this.\textbackslash n\textbackslash n¶ 35   Trujillo objected on the basis"}
	\item Token: \texttt{" his"}
	\item Score: -10.031
\end{itemize}
\item Lowest-activating token \#19:
\begin{itemize}
	\item Excerpt from prompt: \texttt{" lived here happily with his sister Dorothy, wife Mary and three children John, Dora and Thomas until 1808, when the family moved to another nearby house at Allen Bank, and"}
	\item Token: \texttt{" Thomas"}
	\item Score: -9.934
\end{itemize}
\item Lowest-activating token \#20:
\begin{itemize}
	\item Excerpt from prompt: \texttt{" home of Queen Isabella, who (allegedly) arranged the gruesome murder of her husband, Edward II.\textbackslash n\textbackslash nHoughton Hall"}
	\item Token: \texttt{" husband"}
	\item Score: -9.932
\end{itemize}
\item Lowest-activating token \#21:
\begin{itemize}
	\item Excerpt from prompt: \texttt{" Saturday, four on Sunday).\textbackslash n\textbackslash nQueen Victoria bought Sandringham in 1862 for her son, the Prince of Wales (later Edward VII), and the features and furnishings remain"}
	\item Token: \texttt{" son"}
	\item Score: -9.883
\end{itemize}
\item Lowest-activating token \#22:
\begin{itemize}
	\item Excerpt from prompt: \texttt{" the palace, which contains Mary's Bed Chamber, connected by a secret stairway to her husband's bedroom, and ends with the ruins of Holyrood Abbey.\textbackslash n\textbackslash nHoly"}
	\item Token: \texttt{" husband"}
	\item Score: -9.824
\end{itemize}
\item Lowest-activating token \#23:
\begin{itemize}
	\item Excerpt from prompt: \texttt{" holidays.\textbackslash n\textbackslash nThe two-hour tour includes the **Throne Room**, with his-and-hers pink chairs initialed 'ER' and 'P'. Access is"}
	\item Token: \texttt{" his"}
	\item Score: -9.717
\end{itemize}
\item Lowest-activating token \#24:
\begin{itemize}
	\item Excerpt from prompt: \texttt{" is packed with all manner of Highland memorabilia. Look out for the secret portrait of Bonnie Prince Charlie – after the Jacobite rebellions all things Highland were banned, including pictures of"}
	\item Token: \texttt{" Prince"}
	\item Score: -9.691
\end{itemize}
\item Lowest-activating token \#25:
\begin{itemize}
	\item Excerpt from prompt: \texttt{" the last college to let women study there; when they were finally admitted in 1988, some male students wore black armbands and flew the college flag at half mast.\textbackslash n\textbackslash n"}
	\item Token: \texttt{" male"}
	\item Score: -9.652
\end{itemize}
\item Lowest-activating token \#26:
\begin{itemize}
	\item Excerpt from prompt: \texttt{"oh, Michael Bond's Paddington Bear, Beatrix Potter's Peter Rabbit, Roald Dahl's Willy Wonka and JK Rowling's Harry Potter are perennially popular"}
	\item Token: \texttt{"ald"}
	\item Score: -9.613
\end{itemize}
\item Lowest-activating token \#27:
\begin{itemize}
	\item Excerpt from prompt: \texttt{" one of the rooms. In 2003 the close was opened to the public as the Real Mary King's Close.\textbackslash n\textbackslash n\#\#\# SCOTTISH PARLIAMENT BUILDING\textbackslash n"}
	\item Token: \texttt{" King"}
	\item Score: -9.405
\end{itemize}
\item Lowest-activating token \#28:
\begin{itemize}
	\item Excerpt from prompt: \texttt{", Mary, Dorothy and all three children. Samuel Taylor Coleridge's son Hartley is also buried here.\textbackslash n\textbackslash nGrasm"}
	\item Token: \texttt{" Samuel"}
	\item Score: -9.372
\end{itemize}
\item Lowest-activating token \#29:
\begin{itemize}
	\item Excerpt from prompt: \texttt{", the town became northern Europe's most important pilgrimage destination, which in turn prompted Geoffrey Chaucer's \_The Canterbury Tales,\_ one of the most outstanding works in English literature."}
	\item Token: \texttt{"cer"}
	\item Score: -9.351
\end{itemize}
\item Lowest-activating token \#30:
\begin{itemize}
	\item Excerpt from prompt: \texttt{" Queen Isabella, who (allegedly) arranged the gruesome murder of her husband, Edward II.\textbackslash n\textbackslash nHoughton Hall"}
	\item Token: \texttt{" Edward"}
	\item Score: -9.272
\end{itemize}

\end{enumerate}

\subsection{$\nsubj$ Feature Vector}

The thirty highest-activating tokens, along with the prompts from which they came, and their scores, are given below:
\begin{enumerate}
    \item Highest-activating token \#1:
\begin{itemize}
	\item Excerpt from prompt: \texttt{"son Tower** and in front of it a beautiful statue of St Edmund by Dame Elisabeth Frink (1976). The rest of the abbey spreads eastward like a r"}
	\item Token: \texttt{"abeth"}
	\item Score: 18.372
\end{itemize}
\item Highest-activating token \#2:
\begin{itemize}
	\item Excerpt from prompt: \texttt{" a gorgeous hammerbeam roof and a striking sculpture of the crucified Christ by Dame Elisabeth Frink in the north transept.\textbackslash n\textbackslash nThe impressive entrance porch has a"}
	\item Token: \texttt{"abeth"}
	\item Score: 17.388
\end{itemize}
\item Highest-activating token \#3:
\begin{itemize}
	\item Excerpt from prompt: \texttt{" the elaborate Portuguese silver service or the impressive Egyptian service, a divorce present from Napoleon to Josephine"}
	\item Token: \texttt{"ine"}
	\item Score: 16.815
\end{itemize}
\item Highest-activating token \#4:
\begin{itemize}
	\item Excerpt from prompt: \texttt{" rocky beach of **Priest's Cove**, while nearby are the ruins of **St Helen's Oratory**, supposedly one of the first Christian chapels built in West Cornwall"}
	\item Token: \texttt{" Helen"}
	\item Score: 16.309
\end{itemize}
\item Highest-activating token \#5:
\begin{itemize}
	\item Excerpt from prompt: \texttt{", and opened in 1892, this brainchild of his Parisian actress wife, Josephine, was built by French architect Jules Pellechet to display a collection the Bow"}
	\item Token: \texttt{"ine"}
	\item Score: 16.267
\end{itemize}
\item Highest-activating token \#6:
\begin{itemize}
	\item Excerpt from prompt: \texttt{" the film \_Bridget Jones's Diary;\_ a local house was used as Bridget's parents' home.\textbackslash n\textbackslash n1Sights\textbackslash n\textbackslash nBroadway TowerTOWER"}
	\item Token: \texttt{"idget"}
	\item Score: 16.171
\end{itemize}
\item Highest-activating token \#7:
\begin{itemize}
	\item Excerpt from prompt: \texttt{") by his side and a loyal band of followers in support. Arthur went on to slay Rita Gawr, a giant who butchered"}
	\item Token: \texttt{" Rita"}
	\item Score: 16.079
\end{itemize}
\item Highest-activating token \#8:
\begin{itemize}
	\item Excerpt from prompt: \texttt{" for the fact that Sir Robert Walpole's grandson sold the estate's splendid art collection to Catherine the Great of Russia to stave off debts – those paintings formed the foundation of the"}
	\item Token: \texttt{" Catherine"}
	\item Score: 16.039
\end{itemize}
\item Highest-activating token \#9:
\begin{itemize}
	\item Excerpt from prompt: \texttt{" Highlights include the magnificent gold coach of 1762 and the 1910 Glass Coach (Prince William and Catherine Middleton actually used the 1902 State Landau for their wedding in 2011).\textbackslash n\textbackslash n"}
	\item Token: \texttt{" Catherine"}
	\item Score: 15.967
\end{itemize}
\item Highest-activating token \#10:
\begin{itemize}
	\item Excerpt from prompt: \texttt{" by Canaletto, El Greco and Goya as well as 55 paintings by Josephine herself. Among the 15,000 other objets d'art are incredible dresses from"}
	\item Token: \texttt{"ine"}
	\item Score: 15.906
\end{itemize}
\item Highest-activating token \#11:
\begin{itemize}
	\item Excerpt from prompt: \texttt{" looks like something from a children's storybook (a fact not unnoticed by the author Antonia Barber, who set her much-loved fairy-tale \_The Mousehole Cat"}
	\item Token: \texttt{"ia"}
	\item Score: 15.582
\end{itemize}
\item Highest-activating token \#12:
\begin{itemize}
	\item Excerpt from prompt: \texttt{". Precious little now remains save for a few nave walls, the ruined **St Mary's chapel**, and the crossing arches, which may"}
	\item Token: \texttt{" Mary"}
	\item Score: 15.443
\end{itemize}
\item Highest-activating token \#13:
\begin{itemize}
	\item Excerpt from prompt: \texttt{".\textbackslash n\textbackslash nTrain\textbackslash n\textbackslash nThe northern terminus of the Welsh Highland Railway is on St Helen's Rd. Trains run to Porthmadog (£35 return, 2½"}
	\item Token: \texttt{" Helen"}
	\item Score: 15.374
\end{itemize}
\item Highest-activating token \#14:
\begin{itemize}
	\item Excerpt from prompt: \texttt{"2\textbackslash n\textbackslash n\#\#\# KING RICHARD III\textbackslash n\textbackslash nIt's an amazing story. Philippa Langley, a member of the Richard III Society, spent four-and-a"}
	\item Token: \texttt{"a"}
	\item Score: 15.358
\end{itemize}
\item Highest-activating token \#15:
\begin{itemize}
	\item Excerpt from prompt: \texttt{" pit (which can still be seen) from the granary above. In 1566, Mary, Queen of Scots famously visited the wounded tenant of the castle, Lord Bothwell,"}
	\item Token: \texttt{" Mary"}
	\item Score: 15.312
\end{itemize}
\item Highest-activating token \#16:
\begin{itemize}
	\item Excerpt from prompt: \texttt{" Richard III, Henry VIII and Charles I. It is most famous as the home of Catherine Parr (Henry VIII's widow) and her second husband, Thomas Seymour. Princess"}
	\item Token: \texttt{" Catherine"}
	\item Score: 15.275
\end{itemize}
\item Highest-activating token \#17:
\begin{itemize}
	\item Excerpt from prompt: \texttt{" Peninsula\textbackslash n\textbackslash n\#\#\#\# Bodmin Moor\textbackslash n\textbackslash n\#\#\#\# Isles of Scilly\textbackslash n\textbackslash n\#\#\#\# St Mary's\textbackslash n\textbackslash n\#\#\#\# Tresco\textbackslash n\textbackslash n\#\#\#\# Bryher\textbackslash n\textbackslash n\#\#\#\# St Martin"}
	\item Token: \texttt{" Mary"}
	\item Score: 15.246
\end{itemize}
\item Highest-activating token \#18:
\begin{itemize}
	\item Excerpt from prompt: \texttt{"'.\textbackslash n\textbackslash nOutside the cathedral's eastern end is the grave of the WWI heroine Edith"}
	\item Token: \texttt{"ith"}
	\item Score: 15.182
\end{itemize}
\item Highest-activating token \#19:
\begin{itemize}
	\item Excerpt from prompt: \texttt{" many people visit for the region's literary connections; William Wordsworth, Beatrix Potter, Arthur Ransome and John Ruskin all found inspiration here.\textbackslash n\textbackslash n"}
	\item Token: \texttt{"rix"}
	\item Score: 15.135
\end{itemize}
\item Highest-activating token \#20:
\begin{itemize}
	\item Excerpt from prompt: \texttt{" Peninsula\textbackslash n\textbackslash n\#\#\#\# Bodmin Moor\textbackslash n\textbackslash n\#\#\#\# Isles of Scilly\textbackslash n\textbackslash n\#\#\#\# St Mary's\textbackslash n\textbackslash n\#\#\#\# Tresco\textbackslash n\textbackslash n\#\#\#\# Bryher\textbackslash n\textbackslash n\#\#\#\# St Martin"}
	\item Token: \texttt{" Mary"}
	\item Score: 15.111
\end{itemize}
\item Highest-activating token \#21:
\begin{itemize}
	\item Excerpt from prompt: \texttt{" \_Mayor of Casterbridge\_ locations hidden among modern Dorchester. They include **Lucetta's House**, a grand Georgian affair with ornate door posts in Trinity St,"}
	\item Token: \texttt{"etta"}
	\item Score: 15.053
\end{itemize}
\item Highest-activating token \#22:
\begin{itemize}
	\item Excerpt from prompt: \texttt{" leads down to this little cove and the remains of the small Tudor fort of **St Catherine's Castle**.\textbackslash n\textbackslash nPolkerris BeachBEACH\textbackslash n\textbackslash n(  G"}
	\item Token: \texttt{" Catherine"}
	\item Score: 15.051
\end{itemize}
\item Highest-activating token \#23:
\begin{itemize}
	\item Excerpt from prompt: \texttt{"-century **St Catherine's Lighthouse** and its 14th-century counterpart, **St Catherine's Or"}
	\item Token: \texttt{" Catherine"}
	\item Score: 14.979
\end{itemize}
\item Highest-activating token \#24:
\begin{itemize}
	\item Excerpt from prompt: \texttt{" ) ; Castle Yard) stands behind a 15th-century gate near the church of St Mary de Castro (  MAP   GOOGLE MAP ) ; Castle St),"}
	\item Token: \texttt{" Mary"}
	\item Score: 14.968
\end{itemize}
\item Highest-activating token \#25:
\begin{itemize}
	\item Excerpt from prompt: \texttt{" the Glasgow School of Art. It was there that he met the also influential artist and designer Margaret Macdonald, whom he married; they collaborated on many projects and were major influences on"}
	\item Token: \texttt{" Margaret"}
	\item Score: 14.930
\end{itemize}
\item Highest-activating token \#26:
\begin{itemize}
	\item Excerpt from prompt: \texttt{" Nov-Mar )\textbackslash n\textbackslash nThe raising of the 16th-century warship the \_Mary Rose\_ in 1982 was an extraordinary feat of marine archaeology. Now the new £"}
	\item Token: \texttt{"Mary"}
	\item Score: 14.699
\end{itemize}
\item Highest-activating token \#27:
\begin{itemize}
	\item Excerpt from prompt: \texttt{" was claimed by the Boleyn family and passed through the generations to Thomas, father of Anne Boleyn. Anne was executed by her husband Henry VIII in 1533, who"}
	\item Token: \texttt{" Anne"}
	\item Score: 14.686
\end{itemize}
\item Highest-activating token \#28:
\begin{itemize}
	\item Excerpt from prompt: \texttt{". The village has literary cachet too – Wordsworth went to school here, and Beatrix Potter's husband, William Heelis, worked here as a solicitor for"}
	\item Token: \texttt{"rix"}
	\item Score: 14.658
\end{itemize}
\item Highest-activating token \#29:
\begin{itemize}
	\item Excerpt from prompt: \texttt{" are William MacTaggart's Impressionistic Scottish landscapes and a gem by Thomas Millie Dow. There's also a special collection of James McNeill Whistler's lim"}
	\item Token: \texttt{"ie"}
	\item Score: 14.626
\end{itemize}
\item Highest-activating token \#30:
\begin{itemize}
	\item Excerpt from prompt: \texttt{" Stay\textbackslash n\textbackslash nAMillgate House\textbackslash n\textbackslash nADevonshire Fell\textbackslash n\textbackslash nAHelaina\textbackslash n\textbackslash nAQuebecs\textbackslash n\textbackslash nALa Rosa Hotel\textbackslash n\textbackslash n\#\# Yorkshire Highlights"}
	\item Token: \texttt{"aina"}
	\item Score: 14.578
\end{itemize}

\end{enumerate}

The thirty lowest-activating tokens, along with the prompts from which they came, and their scores, are given below:

\begin{enumerate}
    \item Lowest-activating token \#1:
\begin{itemize}
	\item Excerpt from prompt: \texttt{" family was devastated, but things quickly got worse. Emily fell ill with tuberculosis soon after her brother's funeral; she never left the house again, and died on 19 December. Anne"}
	\item Token: \texttt{" brother"}
	\item Score: -11.732
\end{itemize}
\item Lowest-activating token \#2:
\begin{itemize}
	\item Excerpt from prompt: \texttt{" recounted the sighting of a disturbance in the loch by Mrs Aldie Mackay and her husband: 'There the creature disported itself, rolling and plunging for fully a minute"}
	\item Token: \texttt{" husband"}
	\item Score: -11.608
\end{itemize}
\item Lowest-activating token \#3:
\begin{itemize}
	\item Excerpt from prompt: \texttt{" paid time off work during menstruation\textbackslash n• (often from male workers, who viewed the employment of women as competition) women should not be employed in"}
	\item Token: \texttt{" male"}
	\item Score: -11.324
\end{itemize}
\item Lowest-activating token \#4:
\begin{itemize}
	\item Excerpt from prompt: \texttt{"Ornate Plas Newydd was home to Lady Eleanor Butler and Miss Sarah Ponsonby, two society ladies who ran away from Ireland to Wales disguised as men, and"}
	\item Token: \texttt{"onson"}
	\item Score: -11.228
\end{itemize}
\item Lowest-activating token \#5:
\begin{itemize}
	\item Excerpt from prompt: \texttt{" of adultery, debauchery, crime and edgy romance, and is filled with Chaucer's witty observations about human nature.\textbackslash n\textbackslash nHistory\textbackslash n\textbackslash nCanterbury's past"}
	\item Token: \texttt{"cer"}
	\item Score: -11.007
\end{itemize}
\item Lowest-activating token \#6:
\begin{itemize}
	\item Excerpt from prompt: \texttt{" Apartments** is the bedchamber where Mary, Queen of Scots gave birth to her son James VI, who was to unite the crowns of Scotland and England in 1603"}
	\item Token: \texttt{" son"}
	\item Score: -10.971
\end{itemize}
\item Lowest-activating token \#7:
\begin{itemize}
	\item Excerpt from prompt: \texttt{" handsome Jacobean town house belonging to Shakespeare's daughter Susanna and her husband, respected doctor John Hall, stands south of the centre. The exhibition offers fascinating insights"}
	\item Token: \texttt{" husband"}
	\item Score: -10.884
\end{itemize}
\item Lowest-activating token \#8:
\begin{itemize}
	\item Excerpt from prompt: \texttt{" his death, in the hard-to-decipher Middle English of the day, Chaucer's \_Tales\_ is an unfinished series of 24 vivid stories told by a party"}
	\item Token: \texttt{"cer"}
	\item Score: -10.854
\end{itemize}
\item Lowest-activating token \#9:
\begin{itemize}
	\item Excerpt from prompt: \texttt{" especially in **Poets' Corner**, where you'll find the resting places of Chaucer, Dickens, Hardy, Tennyson, Dr Johnson and Kipling, as well as"}
	\item Token: \texttt{"cer"}
	\item Score: -10.794
\end{itemize}
\item Lowest-activating token \#10:
\begin{itemize}
	\item Excerpt from prompt: \texttt{"am-5pm Mar-Oct)\textbackslash n\textbackslash nThis ambitious three-dimensional interpretation of Chaucer's classic tales using jerky animatronics and audioguides is certainly entertaining"}
	\item Token: \texttt{"cer"}
	\item Score: -10.793
\end{itemize}
\item Lowest-activating token \#11:
\begin{itemize}
	\item Excerpt from prompt: \texttt{" haunted places, with spectres from a phantom funeral to Lady Mary Berkeley seeking her errant husband. Owner Sir Humphrey Wakefield has passionately restored the castle's extravagant medieval stater"}
	\item Token: \texttt{" husband"}
	\item Score: -10.696
\end{itemize}
\item Lowest-activating token \#12:
\begin{itemize}
	\item Excerpt from prompt: \texttt{" Windsor Castle in 1861, Queen Victoria ordered its elaborate redecoration as a tribute to her husband. A major feature of the restoration is the magnificent vaulted roof, whose gold mosaic"}
	\item Token: \texttt{" husband"}
	\item Score: -10.673
\end{itemize}
\item Lowest-activating token \#13:
\begin{itemize}
	\item Excerpt from prompt: \texttt{" hall was home to the 16th-century's second-most powerful woman, Elizabeth, Countess of Shrewsbury – known to all as Bess of Hardwick –"}
	\item Token: \texttt{" Count"}
	\item Score: -10.617
\end{itemize}
\item Lowest-activating token \#14:
\begin{itemize}
	\item Excerpt from prompt: \texttt{" Saturday, four on Sunday).\textbackslash n\textbackslash nQueen Victoria bought Sandringham in 1862 for her son, the Prince of Wales (later Edward VII), and the features and furnishings remain"}
	\item Token: \texttt{" son"}
	\item Score: -10.556
\end{itemize}
\item Lowest-activating token \#15:
\begin{itemize}
	\item Excerpt from prompt: \texttt{" is packed with all manner of Highland memorabilia. Look out for the secret portrait of Bonnie Prince Charlie – after the Jacobite rebellions all things Highland were banned, including pictures of"}
	\item Token: \texttt{" Prince"}
	\item Score: -10.424
\end{itemize}
\item Lowest-activating token \#16:
\begin{itemize}
	\item Excerpt from prompt: \texttt{" beautiful, time-worn rooms hold fascinating relics, including the cradle used by Mary for her son, James VI of Scotland (who also became James I of England), and fascinating letters"}
	\item Token: \texttt{" son"}
	\item Score: -10.266
\end{itemize}
\item Lowest-activating token \#17:
\begin{itemize}
	\item Excerpt from prompt: \texttt{", the town became northern Europe's most important pilgrimage destination, which in turn prompted Geoffrey Chaucer's \_The Canterbury Tales,\_ one of the most outstanding works in English literature."}
	\item Token: \texttt{"cer"}
	\item Score: -10.250
\end{itemize}
\item Lowest-activating token \#18:
\begin{itemize}
	\item Excerpt from prompt: \texttt{" the city in 1645. Legend has it that the disease-ridden inhabitants of **Mary King's Close** (a lane on the northern side of the Royal Mile on the site"}
	\item Token: \texttt{" King"}
	\item Score: -10.177
\end{itemize}
\item Lowest-activating token \#19:
\begin{itemize}
	\item Excerpt from prompt: \texttt{" with DVD players, with tremendous views across the bay from the largest two. Bridget and Derek really give this place a 'home away from home' ambience, and can arrange"}
	\item Token: \texttt{" Derek"}
	\item Score: -10.124
\end{itemize}
\item Lowest-activating token \#20:
\begin{itemize}
	\item Excerpt from prompt: \texttt{"          her? She’s up here saying his intent was this.\textbackslash n\textbackslash n¶ 35   Trujillo objected on the basis"}
	\item Token: \texttt{" his"}
	\item Score: -10.113
\end{itemize}
\item Lowest-activating token \#21:
\begin{itemize}
	\item Excerpt from prompt: \texttt{" the last college to let women study there; when they were finally admitted in 1988, some male students wore black armbands and flew the college flag at half mast.\textbackslash n\textbackslash n"}
	\item Token: \texttt{" male"}
	\item Score: -10.058
\end{itemize}
\item Lowest-activating token \#22:
\begin{itemize}
	\item Excerpt from prompt: \texttt{"s at the behest of Queen Victoria, the monarch grieved here for many years after her husband's death. Extravagant rooms include the opulent Royal Apartments and Dur"}
	\item Token: \texttt{" husband"}
	\item Score: -10.018
\end{itemize}
\item Lowest-activating token \#23:
\begin{itemize}
	\item Excerpt from prompt: \texttt{" home of Queen Isabella, who (allegedly) arranged the gruesome murder of her husband, Edward II.\textbackslash n\textbackslash nHoughton Hall"}
	\item Token: \texttt{" husband"}
	\item Score: -9.989
\end{itemize}
\item Lowest-activating token \#24:
\begin{itemize}
	\item Excerpt from prompt: \texttt{", Van Dyck, Vermeer, El Greco, Poussin, Rembrandt, Gainsborough, Turner, Constable, Monet, Pissarro,"}
	\item Token: \texttt{"brand"}
	\item Score: -9.937
\end{itemize}
\item Lowest-activating token \#25:
\begin{itemize}
	\item Excerpt from prompt: \texttt{" 24 vivid stories told by a party of pilgrims journeying between London and Canterbury. Chaucer successfully created the illusion that the pilgrims, not Chaucer (though he appears in the"}
	\item Token: \texttt{"cer"}
	\item Score: -9.909
\end{itemize}
\item Lowest-activating token \#26:
\begin{itemize}
	\item Excerpt from prompt: \texttt{" the palace, which contains Mary's Bed Chamber, connected by a secret stairway to her husband's bedroom, and ends with the ruins of Holyrood Abbey.\textbackslash n\textbackslash nHoly"}
	\item Token: \texttt{" husband"}
	\item Score: -9.862
\end{itemize}
\item Lowest-activating token \#27:
\begin{itemize}
	\item Excerpt from prompt: \texttt{" lived here happily with his sister Dorothy, wife Mary and three children John, Dora and Thomas until 1808, when the family moved to another nearby house at Allen Bank, and"}
	\item Token: \texttt{" Thomas"}
	\item Score: -9.842
\end{itemize}
\item Lowest-activating token \#28:
\begin{itemize}
	\item Excerpt from prompt: \texttt{" 19 prime ministers, countless princes, kings and maharajahs, famous explorers, authors and"}
	\item Token: \texttt{" prime"}
	\item Score: -9.733
\end{itemize}
\item Lowest-activating token \#29:
\begin{itemize}
	\item Excerpt from prompt: \texttt{" held court in the Palace of Holyroodhouse for six brief years, but when her son James VI succeeded to the English throne in 1603, he moved his court to London"}
	\item Token: \texttt{" son"}
	\item Score: -9.711
\end{itemize}
\item Lowest-activating token \#30:
\begin{itemize}
	\item Excerpt from prompt: \texttt{", Mary, Dorothy and all three children. Samuel Taylor Coleridge's son Hartley is also buried here.\textbackslash n\textbackslash nGrasm"}
	\item Token: \texttt{" Samuel"}
	\item Score: -9.654
\end{itemize}
\end{enumerate}

\end{document}